\documentclass[3p,review]{elsarticle}

\pdfoutput=1

\usepackage{lineno,hyperref}
\modulolinenumbers[5]

\usepackage{booktabs}
\usepackage{csquotes}
\usepackage{multirow}
\usepackage{amssymb,amsmath,amsthm,amssymb}
\usepackage{subcaption}
\usepackage[linesnumbered,ruled]{algorithm2e}

\usepackage{tikz}
\usetikzlibrary{positioning}
\usetikzlibrary{arrows}
\usetikzlibrary{calc}
\usetikzlibrary{patterns}
\usetikzlibrary{shapes}
\usetikzlibrary{decorations.text}
\usetikzlibrary{fit}

\newtheorem{plainAlgorithm}{Algorithm}
\newtheorem{theorem}{Theorem}
\newtheorem{corollary}{Corollary}
\newtheoremstyle{case}{}{}{}{}{}{:}{ }{}
\theoremstyle{case}
\newtheorem{case}{Case}

\usepackage{color}

\journal{EURO Journal on Computational Optimization}

\bibliographystyle{elsarticle-num-names}

\newcommand{\abs}[1]{\left|#1\right|}
\newcommand{\set}[1]{\left\{#1\right\}}

\newcommand{\red}[1]{#1}

\newcommand{\redtwo}[1]{#1}

\newcommand{\resultpath}[1]{#1}

\makeatletter
\def\ps@pprintTitle{%
 \let\@oddhead\@empty
 \let\@evenhead\@empty
 \def\@oddfoot{}%
 \let\@evenfoot\@oddfoot}
\makeatother

\begin{document}

\begin{frontmatter}

\title{\redtwo{Mobility Offer Allocations in Corporate Settings}\tnoteref{ack}}
\tnotetext[ack]{This work has been partially funded by the Climate and Energy Funds (KliEn) within the strategic research program ``Leuchtt{\"u}rme  der Elektromobilit{\"a}t'' under grant number 853767 (SEAMLESS).}

\author{Sebastian Knopp\textsuperscript{*}}
\author{Benjamin Biesinger\textsuperscript{**}}
\author{Matthias Prandtstetter\textsuperscript{***}}
\address{AIT Austrian Institute of Technology \\
	Center for Energy -- Integrated Energy Systems \\
	Giefinggasse 4, 1210 Vienna, Austria. \\
	\vspace{2ex}
	\textsuperscript{*} $\langle$firstName$\rangle$.$\langle$lastName$\rangle$@untis.at \\
	\textsuperscript{**} $\langle$first letter of first name$\rangle$$\langle$lastName$\rangle$@gmail.com \\
	\textsuperscript{***} $\langle$firstName$\rangle$.$\langle$lastName$\rangle$@ait.ac.at
}

\begin{abstract}
Corporate mobility is often based on a fixed assignment of vehicles to employees.
Relaxing this fixation \redtwo{and} including
alternatives such as public transportation or taxis for business and private trips
could increase fleet utilization and foster the use of battery electric vehicles.
\redtwo{We introduce the \emph{mobility offer allocation problem}
as the core concept of a flexible booking system for corporate mobility. The problem}
is \red{equivalent to interval scheduling on dedicated unrelated parallel machines.}
We show that the problem is NP-hard to approximate within any factor.
We describe problem specific conflict graphs
for representing and exploring the structure of feasible solutions.
A characterization of all maximum cliques in these conflict graphs reveals symmetries which
allow to formulate stronger integer linear programming models.
We also present an adaptive large neighborhood search based approach which makes use of conflict graphs as well.
\red{In a computational study,
the approaches are evaluated.
It was found that
greedy heuristics perform best if very tight run-time requirements are given,
a solver for the integer linear programming model performs best on small and medium instances,
and the adaptive large neighborhood search performs best on large instances.}

\end{abstract}

\begin{keyword}
Mobility as a Service, Resource Allocation, Integer Linear Programming, Adaptive Large Neighborhood Search
\MSC[2010] 90C10, 90C27, 90C59, 90B06
\end{keyword}

\end{frontmatter}

\section{Introduction}

The transportation sector is facing major changes due to digitalization \redtwo{and} urbanization.
\redtwo{Novel} sharing concepts that offer mobility as a service arise.
While such changes can be observed in many areas, 
corporate mobility has not changed significantly for decades.
\redtwo{Many} companies assign cars to \redtwo{certain employees based on their} hierarchy level.
\redtwo{Usually, such cars} can be used for business and private trips.
This fixed assignment of \redtwo{one car to one} employee
can result in rather large but inefficient fleets
\redtwo{with} cars \redtwo{being} used only less than one hour per day on average~\cite{bates2012spaced,shoup2017high}.
\redtwo{Furthermore}, such cars are often larger than necessary since 
all mobility needs of the employee have to be covered with just one car.
At the same time, other employees are neglected in such concepts.
Often, an additional fleet of vehicles is available for business trips.
\redtwo{These pool cars are booked using the first-come, first-served principle.}

\redtwo{
This paper proposes a novel concept for a corporate mobility service
to improve this situation.
The main idea is to focus on employees' mobility demands
rather than on fixed assignments of cars to employees.
Similar to car rental systems~\cite{oliveira2017fleet},
travelers book mobility rather than a specific car.

A \emph{mobility demand} can represent business travels 
(e.g. travel to a customer location for a meeting from 2:00pm to 4:00pm)
as well as private travels (e.g., weekend trip).
A \emph{mobility offer} is one possible way to satisfy a mobility demand.
For the meeting example from above, one mobility offer is to reserve a pool car from 1:30pm to 4:30pm.
Another mobility offer is to take the train leaving at 1:15pm and returning at 4:45pm.
The final assignment of cars or other transportation modes to travelers
is done via an automatic mobility offer matching system.

This is modeled by the Mobility Offer Allocation Problem (MOAP).
The corresponding optimization model and solution techniques are presented in this paper.
The objective is to determine an integrated allocation that fulfills all mobility demands
and respects the vehicle fleet size
while minimizing expenses and emissions.
}

\subsection{A Corporate Mobility Offer Concept with Flexible Bookings }

\redtwo{The Mobility Offer Allocation Problem} was \redtwo{put into practice as part of an} applied research project
in Austria\footnote{Project SEAMLESS (Sustainable Efficient Austrian Mobility with
Low-Emission Shared Systems), 2016--2019,
\mbox{\url{http://www.seamless-project.at/}},
strategic research program ``Leuchtt{\"u}rme  der Elektromobilit{\"a}t''}.
\redtwo{This paper focuses on modeling and solving the Mobility Offer Allocation Problem.
The research project adressed the corporate mobility concept in a broader way.
The concept aims at improving costs and environmental impacts
by shrinking the fleet \redtwo{size} due to improved utilization,
and by electrifying the fleet due to increased flexibility
(e.g., small electric cars for city trips, large combustion engine cars for holiday trips).
To fully exploit those benefits,
corporate mobility should be available for all employees
and for business as well as for private trips.
The following five main components were addressed
in the implementation of this corporate mobility concept:}

\begin{description}
\item[\bf Fleet of pool cars:] The appropriate mix and size of the fleet has to be determined
based on the company's specific needs.
This fleet might consist of \redtwo{various car types}: from small one- or 
two-seated city cars over vans to small- or medium-sized (light) trucks.
Considering private trips in addition to business trips
could improve company fleet utilization while reducing the need for personally owned cars.
In the literature, a lot of work is dedicated to finding optimal fleet compositions
\cite{hoff2010industrial}.
Battery electric cars should be incorporated \redtwo{to reduce emissions}~\cite{ma2012new}.
Additionally, fleets \redtwo{could be made available} not only to 
employees but also to the public since this demand is often complementary.

\item[\bf Integration of alternative mobility offers:]
\redtwo{The} concept of mobility as a service
\redtwo{should} not be limited to fleet vehicles~\cite{goodall2017rise}.
Alternative modes of transportation
\redtwo{such as} public transportation, bicycle and car sharing systems, or taxis
should be considered as well. 
\redtwo{The system should automatically determine possible alternative mobility offers.}
A seamless integration of such offers could advance the use of low-emission \redtwo{transport} modes.

\item[\bf Flexible booking and accounting system:]
\redtwo{Users} should be able to state preferences in an easy-to-use booking system.
\redtwo{Reserving specific cars should be allowed} only if necessary.
Suitable mobility offers are then assigned automatically.
Additionally, the booking system needs to incorporate an accounting system 
where individual trips can be billed according to their context (e.g., business 
trips are accounted to the company while private trips are billed to the 
traveler).

\item[\bf Key-less access:]
\redtwo{A key-less solution eases access to cars.}
This means each user of the system should \redtwo{have a digital device to
access and drive} company cars.
\redtwo{This could be} an RFID chip card or an application on a mobile phone.

\item[\bf Motivational strategies:]
Appropriate motivational strategies
should be applied in order to achieve the needed mind shift.
Studies have shown that users are hardly willing to waive amenities they are already used to~\cite{gotz2011attraktivitat}.
Economic incentives and
motivational strategies \redtwo{as proposed in}
behavioral economics could be applied~\cite{metcalfe2012behavioural}.
Not only future travelers \redtwo{need to be convinced}, but also upper management, the 
accounting department, or the fleet management division.

\end{description}

\subsection{Related Work}

From an application point of view, only few similar approaches have been found in the literature.
A~related system for sharing electrical vehicles in corporate contexts developed in an applied project
is presented in \citet{Ostermann2014}
which underlines the relevance of the subject at hand.
The method for scheduling vehicles within that system is described in \citet{Koetter2015}.
The proposed approach aims at minimizing fragmentation within the usage of the vehicle fleet
in order to leave room for assigning novel requests.
\redtwo{Their} solution approaches stem from control theory and operating systems,
\redtwo{whereas our paper considers} the problem from an operations research point of view.
\citet{Betz2016} propose an approach which integrates the scheduling of charging battery electric vehicles.
They propose a time indexed mixed integer linear programming (ILP) formulation
\redtwo{with} discrete time periods of 15 minutes.
In this approach, each vehicle class is solved independently.
The authors report computational times of 2~hours for instances \redtwo{with} 30~demands
(called trips in \redtwo{their} paper).
A similar problem, also including charging scheduling, is tackled in \citet{Sassi2014}.
They propose a mixed integer linear programming model and a heuristic approach.
For the exact approach, the authors report computational times of one~hour for instances with about 120~demands
(called tours in \redtwo{their} paper).
Their ILP model avoids overlapping vehicle assignments by including one constraint for each possible vehicle assignment.
\redtwo{We use} a stronger formulation based on cliques in a conflict graph.
Though not explicitly considering the charging \redtwo{process} of vehicles (and thus being less general regarding this aspect),
the problem considered \redtwo{in our paper} is more general \redtwo{with regard to} mobility options.
In particular, vehicle dependent journey intervals (possibly mode of transport dependent)
and alternative journey intervals (e.g., modeling appointment alternatives) are considered,
whereas the approaches from the literature mentioned before assume identical journey intervals for each vehicle.
Also, our approach can include different vehicle admissibility for each demand.

In this paper, the problem is modeled as a generalized operational fixed interval scheduling problem.
\redtwo{Consider~\citet{Kolen2007}} for a survey on interval scheduling and
\redtwo{\citet{kovalyov2007fixed} for} fixed interval scheduling.
\citet{kovalyov2007fixed} define the fixed interval scheduling problem as follows.
We are given $n$ independent non-preemptive jobs to be processed on $m$ independent parallel machines, where each machine can process at most one job at a time.
Each job has a machine-dependent weight and a set of fixed intervals in which it can be processed at a specific machine.
Now, the aim of the \emph{tactical} fixed interval scheduling problem is to minimize the number of needed machines given that all jobs have to be scheduled~\cite{kroon1997exact}.
The goal of the \emph{operational} fixed interval scheduling problem is to maximize the weight of the jobs that can be scheduled on a given set of machines~\cite{ng2014graph}.
In our application, machines correspond to vehicles, jobs to mobility demands, and the possible assignments of a job to a machine at a specific interval to mobility offers.
\citet{ng2014graph} present heuristics for a special case of the operational fixed interval scheduling problem, to which we compare our developed algorithms in Section~\ref{sec:results}.

Similarly, \emph{vehicle scheduling} problems (see~\citet{bunte2009overview} for an overview)
are often found in the context of public transport planning.
\redtwo{They} deal with the task of assigning and sequencing vehicles to trips with fixed travel times.
Some variants of vehicle scheduling account for multiple vehicle types~\cite{hassold2014public} or allow slightly changing the timetable of the trips~\cite{desfontaines2018multiple}.
\redtwo{Classical} vehicle scheduling problems deal with public transport, e.g., bus service planning.
\redtwo{Another} variant is the \emph{rental vehicle scheduling} or \emph{vehicle-reservation assignment} problem.
In these problems, there are usually multiple depots where the fleet is located, there are dynamic booking requests, substitutions (e.g., to a better vehicle class) are possible, and there is the need of vehicle relocations between the depots.
\citet{oliveira2017fleet} give an overview of problems arising in the context of fleet and revenue management of car rental companies.
In~\citet{Ernst2011}, a car rental problem is modeled using a set of assignment problems with linking constraints and tackled using a Lagrangean heuristic.
A real-world use case of such problems is shown in~\citet{Ernst2007} in which a fleet of around 4000 vehicles of a company located in Australia and New Zealand is scheduled. 
Another practical application was tackled in~\citet{oliveira2014relax}
which also presents an integer linear programming model and a matheuristic \redtwo{to solve} the problem.

Interval scheduling problems are based on the concept of interval graphs, in which conflicts between intervals are represented as undirected edges.
Interval graphs are a widely studied graph class in algorithmic graph theory, e.g.,
as a subclass of chordal graphs~\cite{Farber1984}.
Since in our case each interval has a transport mode dependent cost,
the problem is also close to the weighted interval graph coloring problem
which is proven to be NP-hard in \mbox{\citet{Escoffier2005}}.
Another related problem is the maximum weighted independent set problem for interval graphs which is discussed, e.g., in \citet{Bar2001}.
In contrast to the problem considered in our work, the maximum weighted independent set problem is, however, solvable in polynomial-time because of its restriction to interval graphs.
Another related problem is the interval scheduling problem with a resource constraint~\cite{Angelelli2014} which is, in contrast to the MOAP, not a variant of a multiple-interval scheduling problem.
Many problems, like independent set, dominating set, and clique, are shown to be NP-hard
for multiple-interval graphs whereas they are not for 1-interval graphs~\cite{Butman2010}.
Similar models are also used in course timetabling,
consider, e.g., the overview provided in \citet{Burke2010}.
Related are also variants of graph coloring problems~\cite{Marx2004}.

\subsection{Contributions and Structure of this Paper}

We propose a flexible booking system which
leads to the introduction of \red{an} NP-hard \emph{mobility offer allocation problem}.
Solution methods with varying trade-offs between
run-time and solution quality are described and evaluated.
This work complements the existing literature
as it focuses on a real-world application
of multi-interval scheduling which has found only
little attention in the literature.
\red{The mobility offer allocation problem is equivalent to
interval scheduling on dedicated unrelated parallel machines~\cite{ng2014graph}.
To emphasize the application specific focus of this paper,
we use the name \emph{mobility offer allocation problem}.
We will demonstrate its relatation to fixed interval scheduling problems
and show that MOAP is NP-hard to approximate.}
Known heuristics are outperformed by the proposed methods.
\redtwo{We define} problem specific conflict graphs
for representing and exploring the structure of feasible solutions.
We develop a characterization of all maximum cliques
in these conflict graphs,
revealing symmetries which
allow to formulate stronger integer linear programming models.
We also present an adaptive large neighborhood search based approach which makes use of conflict graphs as well.
A computational study using two sets of benchmark instances confirms that,
as one would expect,
\red{the greedy heuristics perform best if very tight run-time requirements are given,
a solver for the integer linear programming model performs best on small and medium instances,
and the adaptive large neighborhood search performs best on large instances.}
The integer linear programming approach of this paper solves instances with up to 200~demands in less than one second.
Instances with up to 2000~demands are solved to optimality within one~hour of computational time.

The outline of this paper is as follows.
Section~\ref{sec:modeling} formally defines the problem and discusses the modeling.
Section~\ref{sec:conflictGraph} defines conflict graphs
as a foundation for the solution approaches proposed in Section~\ref{sec:solutionAlgorithms}.
Then, Section~\ref{sec:vehicleclasses} discusses how symmetries can be exploited
by grouping vehicles into disjoint classes.
A~description of benchmark instances along with a computational study is presented in Section~\ref{sec:results},
followed by conclusions and an outlook in Section~\ref{sec:conclusions}.

\section{Problem Description and Complexity}
\label{sec:modeling}

First, this section formally specifies the problem that is investigated.
Then, it \redtwo{discusses how that model can capture} various practical requirements.
Finally, the section shows that the problem is NP-hard to approximate within any factor.

\subsection{\redtwo{Formal} Problem Description}

In the \emph{Mobility Offer Allocation Problem},
we are given a set of \emph{mobility demands}~$D$ and a fleet of \emph{vehicles}~$V$ representing resources with limited availability.
For each mobility demand~\mbox{$d \in D$},
we are given a set of \emph{mobility offers}~$O_d$
forming the overall set of offers $O=\mathop{\dot{\bigcup}}_{d\in D}O_d$.
Note that for different mobility demands $d_1 \neq d_2 \in D$ we assume the sets $O_{d_1}, O_{d_2}$ to be disjoint.
Each mobility offer $o \in O$ is associated with a \emph{cost}~\mbox{$c_o \in \mathbb{R}$}
and an \emph{journey interval} $T_o = [a_o, b_o)$ with $a_o, b_o\in \mathbb{R}$
defining its start time~$a_o$ and end time~$b_o$ (assuming $a_o < b_o$).
The \emph{duration} of $T_o$ is denoted by $\tau_o = b_o - a_o$.
A mobility offer might require a \emph{vehicle},
so $v_o \in V \cup \{\ast\}$ specifies for each offer $o\in O$ 
either the required vehicle or $\ast$ if no vehicle is needed  (e.g., if the mobility offer corresponds to using public transport).

The problem is to \emph{select} exactly one mobility offer for each demand
such that the total cost of the selected offers is minimal
while overall feasibility is ensured.
Feasibility is achieved if for each pair of selected offers $o_1, o_2 \in O$
with $v_{o_1} = v_{o_2} \not= \ast$ it holds that $T_{o_1}\cap T_{o_2} =\emptyset$,
i.e., the journey intervals of all selected offers
that use the same vehicle do not overlap.

\subsection{Discussion}

Although the problem description given above might appear simplistic,
many features and aspects of practical concerns can be included
\redtwo{due} to the way mobility offers are generated.
We consider this simplicity to be an important advantage of the chosen modeling.
A mobility offer can comprise a complex itinerary consisting of many different locations.
In a real world setting, the offers are not given beforehand, instead they are computed on demand.
Known approaches for route planning (see \cite{Bast2016} for an overview)
can be employed for computing routes for the given vehicles and user needs
to accurately compute journey intervals.
In particular, alternative modes of transports
\redtwo{in addition to} the given fleet of cars \redtwo{can} be included.
This can, e.g., comprise public transport operators,
taxi cooperations, and bike sharing providers.
\red{We assume an infinite capacity for these modes.
Related mobility offers do not require a vehicle from the given fleet (i.e., $v_o = \ast$)
and thus are never in conflict to each other.}
Note that the journey intervals of offers not only include travel times,
but also service times, waiting times, or even visits of multiple customers
with additional travel times in between.
Different offers belonging to the same demand may feature differing journey intervals.
In particular, it is possible to include multiple offers per vehicle
with different journey intervals for modeling alternative dates for the same appointment.
Demands can also be used to model maintenance tasks for the vehicles of the fleet,
with corresponding offers defining possible maintenance dates.

\redtwo{
Multiple offers can require the same vehicle at the same time.
This does not violate the requirement that the offer sets
of different mobility demands must be disjoint.
Consider for example two demands $A$ and $B$ which
both could be satisfied by using a vehicle~$v$.
In this example, we have an offer $o_A$ whose selection would mean
the car will be assigned to satisfy demand $A$,
and a different offer $o_B$ whose selection would mean
the car will be assigned to satisfy demand $B$.
In this case, at most one of these offers can be selected in a feasible solution.
}

A core assumption is that each vehicle is assigned to a fixed location
where all trips start and end.
Yet, different vehicles can be located at different places.
This assumption holds in many corporate contexts and
also in related use-cases such as, e.g., car sharing systems
\red{where each vehicle is bound to one fixed location}.
Beside the administrative reasons observable in practice (e.g., maintenance responsibilities),
station based systems have some inherent advantages:
First, there is no need to consider vehicle relocations due to the imbalance of travel demand.
Second, station based systems are robust against canceled trips.
Canceling a trip in advance cannot cause a problem
if vehicle locations are fixed,
but canceling a trip from $A$ to $B$ in a more flexible system
could render other trips starting from $B$ impossible.
Certainly, there are scenarios where allowing relocations is reasonable.
However, such scenarios are not considered in this work.

Especially in practical applications,
feasibility of the problem instances cannot always be ensured.
A reasonable assumption, however, is to add an offer to each demand denoting a taxi trip which has high cost but is always feasible.
When taxis are not available, an artificial offer~\mbox{$o \in O$}
representing a regret cost can be added to each demand denoting that this demand cannot be fulfilled.
Therefore, infeasibilities are not explicitly considered in the proposed solution algorithms.

\subsection{Complexity}

The mobility offer allocation problem is equivalent to the
interval scheduling on dedicated unrelated parallel machines~(ISDU)
introduced in~\cite{ng2014graph}.
A mapping between instances of ISDU and instances of MOAP
is given by identifying machines with vehicles,
jobs with mobility demands,
and intervals with mobility offers.
Unavailability intervals can be represented by
artificial mobility demands with a single associated offer.
ISDU is shown to be NP-hard in~\cite{ng2014graph}
by stating that it is a generalization of interval scheduling on dedicated identical parallel machines (ISDI),
which is shown to be NP-hard by~\cite{Arkin1987}.
Throughout this paper we use the notation
introduced for the mobility offer allocation problem
as this allows us to discuss in terms of the problem domain.
In addition,
Theorem~\ref{th:np} relates MOAP to another problem
from the literature and shows the stronger result that
MOAP is NP-hard to approximate within any factor by reducing
the interval scheduling problem with machine availabilities (ISMA)
to the mobility offer allocation problem (MOAP).
These results suggests that heuristics might be necessary for solving large and difficult instances.

\begin{theorem}
\label{th:np}
If $P \neq NP$, then for any factor there is no polynomial-time approximation algorithm for the mobility offer allocation problem.
\end{theorem}

\begin{proof}
We show that the MOAP is NP-hard to approximate within any factor by a polynomial-time reduction from the Interval Scheduling with Machine Availabilities (ISMA) problem which has been shown to be NP-complete in~\cite{Kolen2007}.
An instance of the ISMA problem is defined as follows:
There are $m$ machines, continuously available in $[a_i,b_i]$ with $i=1,\dots,m$ and $n$ jobs requiring processing from $s_j$ to $f_j$ with $j=1,\dots,n$.
The question is whether a feasible schedule exists such that each job is processed by a machine within its availability interval such that no two jobs overlap.
From that we construct an instance of MOAP by creating $n$ mobility demands and $m$ vehicles.
For each mobility demand $d_j$ with $j=1,\dots n$ there is a corresponding offer $o$ for each vehicle $m_i$ with $i=1,\dots, n$ if $[s_j,f_j] \subseteq [a_i,b_i]$ with $c_o=0$ and $T_o=[s_j,f_j]$.
Additionally, there is an offer $o'$ for each demand $d_1,\dots,d_n$ with $c_{o'}=1$ and $v_{o'} = \ast$.
Then, the result of the question whether there exists a feasible allocation of exactly one offer to each demand with cost smaller than~1 is also an answer to the ISMA problem.
Now, assume there exists a polynomial-time algorithm for MOAP
which is able to find a solution within a factor of $\alpha$ to the optimal solution.
In case there exists a feasible schedule for the given ISMA problem
the objective function value found by that approximation algorithm must be zero.
Thus, the polynomial-time approximation algorithm for MOAP would solve the ISMA problem.
Unless $P \neq NP$, such an algorithm cannot exist.
\end{proof}

\section{Conflict Graphs}
\label{sec:conflictGraph}

Conflict graphs are a well-known modeling technique, used,
e.g., for solving coloring or scheduling problems.
\redtwo{They} are a fundamental concept for the solution approaches proposed in this paper.
In particular, interval graphs in interval scheduling as discussed in~\citet{Kolen2007} are conflict graphs.
An interval graph is an undirected graph whose nodes correspond to intervals on the real number line
and whose edges identify overlaps between the intervals.
Similarly, we define an offer conflict graph for identifying all possible conflicts between mobility offers.
Nodes in the conflict graph correspond to mobility offers.
Edges identify pairs of offers that may not be selected at the same time.
Subsequently, based on the offer conflict graph,
cliques in that offer conflict graph are identified and a demand conflict graph is introduced.

\subsection{Offer Conflict Graphs}
\label{ssec:offerConflictGraph}

First, for each vehicle~\mbox{$v\in V$}, a conflict graph $G_O^v = (O_v,E_v)$ is defined.
Its nodes $O_v =\set{o \in O \mid o_v = v}$ correspond to all offers requiring vehicle~$v$.
The edges of the graph \mbox{$E^v = \set{\set{o, o'} \in O_v \times O_v \mid T_o \cap T_{o'} \not= \emptyset}$}
identify mobility offers that use the same car and have overlapping journey intervals.
Each conflict graph~$G_O^v$ is an interval graph.
Secondly, we have for each mobility demand~\mbox{$d\in D$} a conflict graph $G_O^d = (O_d, E_d)$
with $E_d = \set{\set{o, o'} \mid o, o' \in O_d}$.
Each conflict graph~$G_O^d$ is a complete graph since exactly one offer must be chosen for each demand.
Then, the \emph{offer conflict graph}~$G_O = (O, E)$ is defined by setting $E = \bigcup_{d\in D}E_d \, \cup \, \bigcup_{v\in V}E^v$.

\subsection{Cliques in Offer Conflict Graphs}

In a general undirected graph $G=(V,E)$, a subset of nodes $C\subseteq V$ is a \emph{clique}
if and only if there exists an edge between \red{each pair of} nodes in $C$, i.e., $\forall \, c_1,c_2\in C : \set{c_1,c_2} \in E$.
A clique $C$ is a \emph{maximum clique} if $G$ does not contain another clique $K$, such that $C\subset K$.
For general graphs, the number of maximum cliques can be exponential in the number of nodes~\cite{Moon1965}.
However, all maximum cliques in the offer conflict graph can be enumerated efficiently
and their number is at most $\abs{D} + \sum_{v \in V} \abs{O_v}$.
This is shown in the following.

\begin{theorem}
\label{th:conflictGraphCliques}
If $K \subset V$ is a maximum clique in a mobility offer conflict graph, then
\begin{itemize}
	\item[-] $\exists \, d \in D$ such that $K = O_d$, or
	\item[-] $\exists \, v \in V$ such that $K$ is a maximum clique in $G_O^v$.
\end{itemize}
\end{theorem}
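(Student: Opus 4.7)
The plan is to analyze an arbitrary maximum (i.e., inclusion-maximal) clique $K$ in $G_O$ by splitting on a single dichotomy: either all offers of $K$ belong to a common demand, or they do not. Recall that every edge of $G_O$ either comes from $E^d$ (same demand $d$) or from $E^v$ (same vehicle $v$ with overlapping absence intervals), and these are the only sources of edges. I will use this partition of edges to classify $K$.

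In the first case, suppose every $o\in K$ belongs to a single demand $d$, so $K\subseteq O_d$. Since $O_d$ is the node set of the complete subgraph on $E^d\subseteq E$, the set $O_d$ itself is a clique in $G_O$. Maximality of $K$ then forces $K=O_d$, matching the first alternative of the theorem.

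In the second case, fix two offers $o_1,o_2\in K$ that belong to distinct demands $d_1\neq d_2$. Their connecting edge cannot lie in any $E^d$, so it must lie in $E^v$ for some vehicle $v$, giving $v_{o_1}=v_{o_2}=v$ and $T_{o_1}\cap T_{o_2}\neq\emptyset$. The central step is to upgrade this to $v_{o}=v$ for \emph{every} $o\in K$. Given an arbitrary $o\in K$, it is adjacent to both $o_1$ and $o_2$; since $o$ can belong to at most one demand, at least one of the two adjacencies cannot be realized through $E^{d_1}$ or $E^{d_2}$ and therefore must be realized through some $E^{v'}$. That forces $v_{o}$ to coincide with either $v_{o_1}$ or $v_{o_2}$, which is $v$ in both instances. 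Consequently $K\subseteq O^v$, and since all edges of $K$ then lie in $E^v$, the set $K$ is a clique in $G_O^v$.

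Finally, I argue that $K$ is even a maximum clique in $G_O^v$: any clique $K'\supsetneq K$ in $G_O^v$ would, because $E^v\subseteq E$, also be a clique in $G_O$, contradicting the maximality of $K$ in $G_O$. I expect the main obstacle to be the careful bookkeeping in the second case, specifically the argument that adjacency in $G_O$ to two offers from different demands propagates the vehicle identity to every further member of $K$; the rest is essentially definition chasing using that $E=\bigcup_d E^d\cup\bigcup_v E^v$.
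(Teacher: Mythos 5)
Your case split and the vehicle-propagation argument are exactly the paper's: the published proof likewise notes that an edge between offers of distinct demands can only be a vehicle edge, sets $v=v_a=v_b$, and argues that any $k\in K$ with $v_k\neq v$ would need demand edges to two offers of \emph{different} demands, which is impossible; your Case~1 is identical as well, and your closing observation (a strictly larger clique of $G_O^v$ would also be a larger clique of $G_O$) makes explicit a step the paper leaves implicit.

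However, the step ``since all edges of $K$ then lie in $E^v$, the set $K$ is a clique in $G_O^v$'' does not follow, and this is a genuine gap. From $K\subseteq O^v$ you only know that every offer in $K$ requires vehicle $v$; an edge of $K$ between two offers of the \emph{same} demand is a demand edge and need not lie in $E^v$, because the model explicitly allows one demand to carry several offers for the same vehicle with pairwise disjoint absence intervals (alternative appointment dates; cf.\ offers $A1$ and $A2$ in the paper's example). Concretely, take $O_d=\{a,b\}$ with $v_a=v_b=v$, $T_a=[0,1)$, $T_b=[2,3)$, and a second demand with a single offer $c$, $v_c=v$, $T_c=[0,3)$: then $\{a,b,c\}$ is a maximal clique of $G_O$ (the edge $\{a,b\}$ comes only from the demand), yet it equals no $O_{d'}$ and is not even a clique of $G_O^v$. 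So the inference you assert fails, and the statement needs an additional hypothesis (e.g., that same-demand offers on the same vehicle always overlap, or that a maximal clique of the second kind contains at most one offer per demand). To be fair, the paper's own proof stops at $K\subseteq O^v$ and never bridges this step either, so you have not overlooked an argument the paper supplies --- you have surfaced its missing link.
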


\begin{proof}
Let $K \subset V$ be a maximum clique.
\begin{case}
If $\exists \, d \in D$ such that $K \subseteq O_d$, then $K = O_d$ since $G_O^d = (O_d, E_d)$ is a complete graph.
\end{case}

\begin{case}
If $\nexists \, d \in D$ such that $K \subseteq O_d$, then
$\exists \; c \neq d \in D$ with $K \cap O_c \neq \emptyset$ and $K \cap O_d \neq \emptyset$.
Further, there must exist an edge $\set{a, b} \in E$ with $a \in O_c$ and  $b \in O_d$ since $K$ is a clique.
By construction of the conflict graph,
this edge can be only induced by a vehicle conflict because $c$ and $d$ belong to different demands.
Thus, we have $v_a = v_b$.
Now, denoting $v = v_a = v_b$, assume there is a node $k \in K$ with $v_k \neq v$.
Since there is no vehicle conflict,
$k$ can be connected to both nodes, $a$ and $b$, only due to a demand conflict.
However, there cannot be a demand conflict to both nodes at the same time,
since they correspond to different demands.
Since $K$ is a maximum clique,
there cannot be a node $k \in K$ with $v_k \neq v$, thus $K \subseteq O_v$.%
\end{case}%
\end{proof}

Theorem~\ref{th:conflictGraphCliques} shows that there are only two types
of maximum cliques in the mobility offer conflict graph.
Next, we describe the construction of the conflict graph by enumerating the cliques identified above.
The first type of cliques (offers belonging \redtwo{to} the same demand) can be derived directly from the problem instance.
The second type of cliques (offers belonging to the same vehicle with overlapping time intervals)
can be computed independently for each vehicle.
As proposed in \cite{Gupta1982}, this can be done by adapting
the algorithm of \cite{Gupta1979} for finding a minimum coloring of an interval graph.
The following algorithm describes that adaption.

\begin{plainAlgorithm} \label{alg:maxClique}
This algorithm successively reports all maximum cliques in an interval graph $G = (V, E)$.
We assume the interval graph to be given by its implicit representation, i.e., a set of intervals in the real line.
The start and end dates of the intervals are denoted as left and right endpoints, respectively.

\begin{enumerate}
	\item Maintain an initially empty set of nodes $C$, representing the current maximum clique candidate.
	\item
		Sort the $2 \cdot \abs{V}$ endpoints of the intervals of $V$ in ascending order.
		In case of ties with left and right endpoints of journey intervals, right endpoints always come first.
	\item Scan the list of sorted endpoints. Let $e$ be the current endpoint.
	\begin{enumerate}
		\item[If]$e$ is a left endpoint: Add the corresponding offer to $C$.
		\item[If]$e$ is a right endpoint:
		\begin{enumerate}
			\item[1.] If the previous endpoint was a left endpoint, report $C$ as a newly found maximum clique.
			\item[2.] Remove $e$ from the current maximum clique candidate $C$.
		\end{enumerate}
	\end{enumerate}
\end{enumerate}
\end{plainAlgorithm}

The runtime complexity of Algorithm~\ref{alg:maxClique} is $\mathcal{O}(\abs{V} \cdot log \, \abs{V})$,
not including the output complexity of reporting newly found maximum cliques.
The number of maximum cliques in an interval graph $G=(V,E)$ is at most $\abs{V}$
since for each node in Algorithm~\ref{alg:maxClique} at most one maximum clique is reported.
Since each clique can contain at most~$\abs{V}$ nodes,
the runtime complexity of the algorithm including the effort for reporting all maximum cliques is $\mathcal{O}(\abs{V}^2)$.

Now, we can make the following observation regarding the number of maximum cliques.

\begin{corollary}
The number of maximum cliques in a mobility offer conflict graph is at most $\abs{D} + \sum_{v \in V} \abs{O_v}$.
\end{corollary}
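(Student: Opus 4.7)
The plan is to combine Theorem~\ref{th:conflictGraphCliques} with the bound obtained in the discussion of Algorithm~\ref{alg:maxClique}. Theorem~\ref{th:conflictGraphCliques} partitions the maximum cliques of $G_O$ into two types according to whether the clique coincides with $O_d$ for some demand $d\in D$, or is a maximum clique of some vehicle interval graph $G_O^v$. So the counting argument naturally splits along these two cases.

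First I would bound the number of cliques of the first type. Since distinct demands $d,d'\in D$ give rise to disjoint offer sets $O_d$ and $O_{d'}$, each demand contributes at most one clique of the form $K=O_d$, yielding at most $\abs{D}$ cliques of this type.

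Next I would bound the number of cliques of the second type. For each vehicle $v\in V$, the graph $G_O^v$ is an interval graph on $\abs{O^v}$ nodes, and the paragraph following Algorithm~\ref{alg:maxClique} already states that such a graph has at most $\abs{O^v}$ maximum cliques (since the algorithm reports at most one maximum clique per scanned endpoint, and in fact at most one per right endpoint). Summing over all vehicles gives at most $\sum_{v\in V}\abs{O^v}$ cliques of this type.

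Adding the two bounds yields the claimed $\abs{D}+\sum_{v\in V}\abs{O^v}$. There is no real obstacle here since all the technical work has been done in Theorem~\ref{th:conflictGraphCliques} and in the analysis of Algorithm~\ref{alg:maxClique}; the only subtlety to check is that the two cases in Theorem~\ref{th:conflictGraphCliques} need not be disjoint (a clique $O_d$ consisting of offers all sharing the same vehicle would be counted in both), but this only makes the bound loose rather than invalid, so the inequality still holds.
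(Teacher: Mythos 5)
Your proof is correct and follows the same route as the paper: apply Theorem~\ref{th:conflictGraphCliques} to split maximum cliques into the demand type (at most $\abs{D}$ of them) and the vehicle type (at most $\abs{O^v}$ per vehicle, by the interval-graph bound from Algorithm~\ref{alg:maxClique}), then sum. Your extra remark that the two cases may overlap, which only loosens the bound, is a valid observation the paper leaves implicit.
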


\begin{proof}
This follows from Theorem~\ref{th:conflictGraphCliques} and
the fact that the number of maximum cliques in an interval graph~\mbox{$G=(V,E)$} is at most $\abs{V}$.
\end{proof}

\subsection{Demand Conflict Graphs}
\label{ssec:demandConflictGraph}

We now introduce demand conflict graphs in order to identify potentially conflicting mobility demands.
These graphs are applied in Section~\ref{ssec:LNS}
for defining an efficient destroy operator
of a large neighborhood search based approach.
The following definition of a demand conflict graph is described in terms of the quotient graph of an offer conflict graph.
Beforehand, we recall the definition of a quotient graph which is a known concept from graph theory (see, e.g., \cite{Gustin1963} or \cite{Schulz2013}).

A \emph{quotient graph} \mbox{$G_q = (V_q, E_q)$} is defined for
a given partitioning $V = \mathop{\dot{\bigcup}}_{i=1}^k \, V_i$ of the nodes of an original graph $G = (V, E)$.
The nodes~$V_q$ of the quotient graph \redtwo{are} then given by
$V_q = \set{V_1, V_2, \dots, V_k}$.
There exists an edge~$\set{V_i, V_j} \in E_q$ between two nodes $V_i, V_j \in V_q$ in the quotient graph~$G_q$
if and only if there exists an edge~$\set{a, b} \in E$ with $a \in V_i$ and $b \in V_j$ in the original graph~$G$.

A \emph{demand conflict graph} $G_D = (D, E_D)$ is defined as the quotient graph of an offer conflict graph $G_O = (O, E)$
with a partitioning of the offers $O = \mathop{\dot{\bigcup}}_{d \in D} \, O_d$ given by the demands $d \in D$.
So, each node in a \emph{demand conflict graph} $G_D = (D, E_D)$ is associated to a mobility demand $d\in D$.
The set of edges $E_D$ contains an edge $\set{d,h}$ with $d, h \in D$ if
there is a potential conflict between the corresponding demands $d \in D$ and $h \in D$.
A potential conflict between demands $d$ and $h$ exists if there is an offer $a \in O_{d}$ conflicting with an offer $b \in O_{h}$,
i.e., both offers have overlapping journey intervals requiring the same vehicle.
Note that demand conflict graphs are related to multiple-interval graphs as described in~\cite{Butman2010}.
The perspective of demand conflict graphs also allows to
see the MOAP as a weighted variant of the known list coloring problem described in \cite{Marx2004},
with colors corresponding to vehicles.

\subsection{Example}

\begin{figure}

\vspace{0.5em}

\tikzset{
	journeyInterval/.style = {
		draw=black,
		fill=white,
		font=\small
	},
	demandEdge/.style = {
		-,
		dashed
	}
}

\newcommand{\drawInterval}[4]{
	\draw[journeyInterval] (#1\linewidth,-#3-0.4) rectangle (#2\linewidth,-#3+0.4);
	\node[] at ({(#1+#2)/2*\linewidth}, -#3) { #4 };
}

\newcommand{\drawSeparatorline}[1]{
	\draw (0, -#1) edge[-, dotted] (\linewidth, -#1) ;
}

\newcommand{\conflictGraphNode}[3]{
	\node (#1) [circle, draw=black, font=\small] at(#2\linewidth, -#3) { #1 };
}

\newcommand{\thickBackgound}[5]{
	\node [circle, fill=#5, minimum size=2.65em] at (#1\linewidth, -#2) {} ;
	\draw[draw=#5, line width=2.65em, rounded corners=0.5em]
		(#1\linewidth, -#2) -- (#3\linewidth, -#4) ;
	\node [circle, fill=#5, minimum size=2.65em] at (#3\linewidth, -#4) {} ;
}

\begin{subfigure}{0.45\linewidth}
    \begin{tikzpicture}[thick,scale=1.0, every node/.style={scale=1.0}]
			\node[] at(0.03\linewidth, -2) {$V_1$};
			\drawInterval{0.125}{0.325}{1}{A1}
			\drawInterval{0.7}{0.9}{1}{A2}
			\drawInterval{0.35}{0.75}{2}{C1}
			\drawSeparatorline{2.75}

			\node[] at(0.03\linewidth, -4) {$V_2$};
			\drawInterval{0.1}{0.35}{3.5}{A3}
			\drawInterval{0.65}{0.95}{3.5}{B1}
			\drawInterval{0.275}{0.825}{4.5}{C2}
			\drawSeparatorline{5.25}

			\node[] at(0.03\linewidth, -6.5) {$\ast$};
			\drawInterval{0.1}{0.35}{6}{A4}
			\drawInterval{0.7}{1.0}{6}{B2}
			\drawInterval{0.25}{0.85}{7}{C3}

			\node (T) at (0.95\linewidth, -8.125) { \small Time } ;
			\draw[thick,-latex] (0.0, -8.125) -- (T) ;

    \end{tikzpicture} \\[-1em]
	\caption{Journey Intervals of an MOAP instance.}
	\label{subfig:journeyIntervals}
\end{subfigure}
\hspace{0.1\linewidth}
\begin{subfigure}{0.45\linewidth}
    \begin{tikzpicture}[thick,scale=1.0, every node/.style={scale=1.0}]

			\draw[draw=black!10, line width=0.75em, rounded corners=0.25em, dotted]
				(0.2\linewidth, -2.5) -- (0.5\linewidth, -3.35) ;
			\draw[draw=black!10, line width=0.75em, rounded corners=0.25em, dotted]
				(0.5\linewidth, -3.15) -- (0.8\linewidth, -4.75) ;

			\thickBackgound{0.2125}{0.975}{0.8}{1}{black!22}
			\thickBackgound{0.2125}{0.975}{0.225}{3.5}{black!22}
			\thickBackgound{0.225}{3.5}{0.225}{6}{black!22}

			\thickBackgound{0.55}{2}{0.55}{4.5}{black!22}
			\thickBackgound{0.55}{4.5}{0.55}{7}{black!22}

			\thickBackgound{0.8}{3.5}{0.85}{6}{black!22}

			\conflictGraphNode{A1}{0.225}{1}
			\conflictGraphNode{A2}{0.8}{1}
			\conflictGraphNode{C1}{0.55}{2}
			\draw (C1) edge[-] (A2);

			\conflictGraphNode{A3}{0.225}{3.5}
			\conflictGraphNode{B1}{0.8}{3.5}
			\conflictGraphNode{C2}{0.55}{4.5}
			\draw (A3) edge[-] (C2);
			\draw (C2) edge[-] (B1);

			\conflictGraphNode{A4}{0.225}{6}
			\conflictGraphNode{B2}{0.85}{6}
			\conflictGraphNode{C3}{0.55}{7}

			\draw (A1) edge[demandEdge] (A2) ;
			\draw (A1) edge[demandEdge] (A3) ;
			\draw (A1) edge[demandEdge, bend right = 30] (A4) ;
			\draw (A2) edge[demandEdge, bend right = 26] (A3) ;
			\draw (A2) edge[demandEdge, bend left = 3] (A4) ;
			\draw (A3) edge[demandEdge] (A4) ;

			\draw (B1) edge[demandEdge] (B2);

			\draw (C1) edge[demandEdge] (C2);
			\draw (C1) edge[demandEdge, bend left = 30] (C3);
			\draw (C2) edge[demandEdge] (C3);

			\node[text opacity=0] (T) at (0.95\linewidth, -8.125) { \small Time } ;
    \end{tikzpicture} \\[-1em]
	\caption{Offer and demand conflict graphs.}
	\label{subfig:conflictGraph}
\end{subfigure}
\vspace{0.5em}
\caption{An example showing vehicles, demands, offers, journey intervals, and corresponding conflict graphs.}
\label{fig:example}
\vspace{2em}
\end{figure}

Figure~\ref{fig:example} shows an example of a MOAP instance with two vehicles $V_1$ and $V_2$.
It includes three mobility demands $A$, $B$, and $C$
with mobility offers $\set{A1, A2, A3, A4}$,  $\set{B1, B2}$, and  $\set{C1, C2, C3}$, respectively.
The journey intervals of these offers are shown in Figure~\ref{subfig:journeyIntervals}.
Note that demand $A$ demonstrates alternative appointment dates since
the offers $A1$ and $A2$ require the same vehicle during different journey intervals.
The corresponding conflict graphs are shown in Figure~\ref{subfig:conflictGraph}.
Circles represent nodes in the offer conflict graph.
Conflict edges between two offers of the same demand are drawn using dashed lines.
Conflicts related to vehicles with overlapping journey intervals are drawn using straight lines.
As indicated by the asterisk~$\ast$, the offers $A4$,  $C3$, and $B2$ do not require any vehicles.
\redtwo{Thus}, there are no vehicle induced conflicts in between them.
For the offers which require a vehicle,
we observe overlapping journey intervals between $A2$ and $C1$,  $A3$ and $C2$, and $C2$ and $B1$.
Feasible selections, e.g., are $\set{A3, B1, C1}$ or $\set{A2, B2, C2}$.
The demand conflict graph is indicated by background shapes which group nodes that correspond to the same demand.
These three shapes form the nodes of the demand conflict graph.
\red{For example, the set of nodes $\set{A1, A2, A3, A4}$ in the offer conflict graph
corresponds to the node representing demand $A$ in the demand conflict graph.}
There are two edges in the demand conflict graph
which are indicated by the large dotted lines connecting the shapes.
\red{One edge connects the nodes corrsponding to the demand $A$ and $C$,
another connects the demands $B$ and $C$.}

\section{Solution Approaches}
\label{sec:solutionAlgorithms}

We propose a variety of solution approaches including an ILP model using
a general purpose solver, greedy algorithms, and an adaptive large neighborhood search.
\redtwo{All are} evaluated against two sets of benchmark instances in Section~\ref{sec:results}.
For solving the ILP,
the mixed integer linear programming solver IBM ILOG CPLEX Optimizer, version 12.6.2, is used.
For larger instances,
this exact approach is not successful anymore which is why we also propose heuristic solution algorithms.
\redtwo{All} proposed methods make use of the conflict graphs introduced in Section~\ref{sec:conflictGraph}.

\subsection{Integer Linear Programming Model}
\label{ssec:ilp}

The definition of the offer conflict graph leads to an
integer linear programming model which is presented next.
We define binary decisions variables $x_o\in \{0,1\}$ for each mobility offer $o \in O$
denoting whether or not an offer is selected.
We denote the set of maximum cliques in the conflict graph~$G^v$ of a vehicle $v \in V$ by $C^v$.

\begin{align}
	  min \sum_{o \, \in \, O} x_o \cdot c_o & & \label{ilp:obj}\\
s.t.	\sum_{o \, \in \, O_d} x_o \, = & ~1 & \forall \, d \in D \label{ilp:c1}\\
	\sum_{o \, \in \, K} x_o \, \leq & ~1 & \forall \, v \in V, \; \forall \, K \in C^v \label{ilp:c2}\\
	x_o \, \in & ~\set{0,1} & \forall \, o\in O
\end{align}

\vspace{1em}
The objective function (\ref{ilp:obj}) minimizes the sum of the costs of selected offers.
Constraints (\ref{ilp:c1}) ensure that for each mobility demand exactly one offer is selected.
Constraints (\ref{ilp:c2}) prevent selecting offers which require the same vehicle at the same time.
Constraints~(\ref{ilp:c1}) and (\ref{ilp:c2}) directly correspond to
the two types of maximum cliques identified in Theorem~\ref{th:conflictGraphCliques} of Section~\ref{ssec:offerConflictGraph}.

Note that one could replace \redtwo{Constraints}~(\ref{ilp:c2}) by individual constraints of the form $x_a + x_b \leq 1$
for each edge~\mbox{$\set{a, b}$} of the conflict graphs~$G^v$.
However, the maximum clique based formulation (which uses fewer constraints)
is stronger than the edge based formulation.
\redtwo{It} implies all edge based constraints,
and the solution space of the linear programming relaxation is smaller.
This observation was confirmed in preliminary numerical experiments:
Much larger memory requirements and higher solution times were observed for the edge based formulation
when compared to the maximum clique based formulation.

\subsection{Greedy Heuristic}
\label{ssec:greedy}

\redtwo{We propose} a greedy heuristic
which iterates over all mobility demands and
selects a feasible (i.e., non-conflicting) mobility offer with minimum cost in each iteration.
The set of already selected offers during iteration is denoted by $O' \subset O$.
For a demand $d \in D$ without \redtwo{any} selected offer,
its offers eligible for selection are given by $L_d = \set{o \in O_d \mid \forall v \in O' : (o, v) \notin E)}$.
The offer to select is arbitrarily chosen from $\mathop{\mathrm{argmin}}_{o \in L_d}(c_o)$.
\redtwo{Hence,} the order of the iteration is an important choice which strongly influences the overall objective value.
Therefore, we propose the following sort criteria where
a mobility demand $l\in D$ is chosen before a mobility demand $r\in D$ if

\vspace{0.6em}
\begin{description}
	\item[MinMinCost]: $\displaystyle \min_{o \, \in \, O_l}{c_o} \; < \; \min_{o \, \in \, O_r}{c_o}$,
	\item[MaxMinCost]: $\displaystyle \min_{o \, \in \, O_l}{c_o} \; > \; \min_{o \, \in \, O_r}{c_o}$,
	\item[MinMinCostPerTime]: $\displaystyle \min_{o \, \in \, O_l}{\frac{c_o}{ \tau_o}} \; < \; \min_{o \, \in \, O_r}{\frac{c_o}{\tau_o}}$,
	\item[MaxMinCostPerTime]: $\displaystyle \min_{o \, \in \, O_l}{\frac{c_o}{ \tau_o}} \; > \; \min_{o \, \in \, O_r}{\frac{c_o}{\tau_o}}$,
	\item[MinAveCost]: $\displaystyle \frac{1}{|O_l|} \cdot \sum_{o \, \in \, O_l}{c_o} \; < \; \frac{1}{|O_r|} \cdot \sum_{o \, \in \, O_r}{c_o}$,
	\item[MaxAveCost]: $\displaystyle \frac{1}{|O_l|} \cdot \sum_{o \, \in \, O_l}{c_o} \; > \; \frac{1}{|O_r|} \cdot \sum_{o \, \in \, O_r}{c_o}$.
	\item[Random]: The ordering is determined by random sampling without replacement.
\end{description}
\vspace{0.6em}

This leads to seven different variants of the greedy algorithm.
They are used stand-alone, for initial solution generation, and as a part of the repair methods of the~ALNS.
In detail, the greedy offer selection algorithm works as follows.

\begin{plainAlgorithm} \label{alg:greedy}
This algorithm greedily selects one mobility offer~$o \in O$ for each of the given mobility demands~$D$
using the offer conflict graph $G = (O, E)$.
\begin{enumerate}
	\item
		Sort all mobility offers $o \in O$ lexicographically:
		First, by the position of the corresponding demand according to one of the sorting criteria introduced above;
		Second, ascending by the cost~$c_o$ of the offer.
	\item
		Mark each offer as \emph{selectable}.
	\item Scan the list of sorted offers:
	\begin{itemize}
		\item[If] the current offer $o \in O$ is selectable:
			\begin{itemize}
				\item Report the offer $o$ as selected.
				\item Mark all offers $o'$ with $\set{o, o'} \in E$ as not selectable.
		\end{itemize}
	\end{itemize}
\end{enumerate}
\end{plainAlgorithm}

Algorithm~\ref{alg:greedy} has a runtime complexity of
$\mathcal{O}(\abs{O} \cdot \mathrm{log}\abs{O} + \abs{E})$
since the offers~$O$ are sorted and iterated once,
and each edge~$\set{o, o'} \in E$ is touched at most once.

\subsection{Adaptive Large Neighborhood Search}
\label{ssec:LNS}
We propose an \emph{adaptive large neighborhood search}~(ALNS) metaheuristic,
whose foundation has originally been introduced by~\citet{Shaw1998}.
\redtwo{It} was further developed by \cite{Pisinger2010} who introduced an adaptive choice of its operators in~\cite{ropke2006adaptive}.
Large neighborhood search approaches have already been successfully used for
heuristically solving combinatorial optimization problems
in the domain of vehicle routing~(\cite{Prescott-Gagnon2009, Ribeiro2012}),
pickup-and-delivery problems~(\cite{Ropke2006}), \redtwo{or} scheduling problems~(\cite{Godard2005}).

An outline of the adaptive large neighborhood search is shown in Algorithm~\ref{alg:alns}.
The ALNS utilizes \emph{destroy} and \emph{repair} operators, $\Omega^-$ and $\Omega^+$, respectively.
\redtwo{It} starts by creating an initial solution $s$ followed by a series of destroy and repair moves to improve solutions.
A new solution $s^\mathrm{t}$ is obtained from a previous solution
by a move composed of a pair of destroy and repair operations.
This aims at improving a given solution by unassigning a set of 
decisions variables (destroy) and subsequently reassigning them (repair).
Whenever such a move is \emph{accepted}, e.g., if it improves the current solution, it is the new incumbent solution for the next iteration.
In our ALNS implementation a solution $s=(o_1,\dots,o_{|D|})$ is represented as
\redtwo{a} list of selected offers with $o_i\in O_i$, $\forall i=1,\dots,|D|$.
The objective value $c(s)=\sum_{o\in s}c_o$ of a solution $s$ is the total cost of its offers.
While the standard LNS \redtwo{has} only one destroy and one repair operator, the ALNS extension allows for multiple destroy and repair operators.
In each iteration, one of each operators is chosen based on \emph{weights} $\rho^-$ and $\rho^+$ assigned to the destroy and repair operators, respectively.
The destroy operator  $\omega_i\in\Omega^-$ is chosen with probability $p^-_i=\rho^-_i / (\sum_{i\in \Omega^-} \; \rho^-_i)$.
The repair operator is chosen analogously.
While the LNS \redtwo{accepts} only improving solution candidates, for the acceptance criterion of the ALNS we use a \emph{simulated annealing} based approach as suggested in~\cite{ropke2006adaptive}.
A generated solution $s^\mathrm{t}$ is accepted with probability $exp({-\frac{c(s^\mathrm{t}) - c(s)}{T}})$, where $T>0$ is the temperature.
The temperature starts with an initial value $T_{\mathrm{start}}$ and decreases after each iteration to $T := c \cdot T$ by using a cooling rate $c$ with $0<c<1$. 
As described in~\cite{ropke2006adaptive}, setting the start temperature is crucial for the algorithm's performance, but depends on the problem instance.
Therefore, the same method as in~\cite{ropke2006adaptive} is used.
The start temperature is set such that the first generated solution after the initial solution that is $w$ percentage worse is accepted with probability $p^\mathrm{w}$, where $w$ and $p^\mathrm{w}$ are parameters of the ALNS.
The weights of the selected destroy and repair operators with indices $i$ and $j$ are adjusted after each iteration by setting $\rho^-_i=\lambda\rho^-_i+(1-\lambda)\sigma$ and $\rho^+_j=\lambda\rho^+_j+(1-\lambda)\sigma$.
The parameter $\lambda$ with $0<\lambda<1$ is a \emph{decay} parameter determining the impact of the previous weight value.
The value $\sigma$ modifies the weight depending on the performance of the destroy and repair operation pair.
It is set in the following way ($\sigma_1, \sigma_2$, and $\sigma_3$ are parameters of the ALNS):
\begin{itemize}
	\item $\sigma=\sigma_1$ if $c(s^\mathrm{t}) < c(s^\mathrm{b})$, i.e., the new solution candidate improves the best found solution~$s^\mathrm{b}$.
	\item $\sigma=\sigma_2$ if $c(s^\mathrm{b}) < c(s^\mathrm{t}) < c(s)$, i.e., the new solution candidate improves the current solution.
	\item $\sigma=\sigma_3$ if $s^\mathrm{t}$ is accepted but $c(s) < c(s^\mathrm{t})$, i.e., the new solution does improve neither the best nor the current solution but is still accepted due to the acceptance criterion.
\end{itemize}
Additionally, $\sigma=0$ if the new solution candidate has already been generated, i.e., is a duplicate.
Duplicate checking is implemented by using a hash set storing all \redtwo{hashes of} generated solution candidates.
Furthermore, since the exact repair method (see Section~\ref{sssec:repair}) is expected to consume much more time than the greedy repair method, $\sigma$ is scaled by the time needed for performing the corresponding repair operation.
This reduces the bias towards strong but time-consuming operators.
Corresponding destroy and repair operators are described in the following sections.

\setcounter{algocf}{\value{plainAlgorithm}}

\begin{algorithm}
\caption{Adaptive large neighborhood search~(\cite{Pisinger2010})} \label{alg:alns}
\KwIn{feasible solution $s$}
\KwOut{best found solution $s^\mathrm{b}$}
$s^\mathrm{b}=s$; $\rho^-=(1,\dots,1)$; $\rho^+=(1,\dots,1)$\;
\Repeat{stop criterion is met}{%
select destroy and repair operators $d\in \Omega^-$ and $r\in\Omega^+$ using $p^-$ and $p^+$\;
$s^\mathrm{t}=r(d(s))$\;
\If{accept($s^\mathrm{t},s)$}{
$s=s^\mathrm{t}$\;}
\If{$c(s^\mathrm{t})<c(s^\mathrm{b})$}{
$s^\mathrm{b}=s^\mathrm{t}$\;}
update $\rho^-$ and $\rho^+$\;
}%
\Return $s^\mathrm{b}$\;
\end{algorithm}

\subsubsection{Destroy Operators}
All destroy operators deselect a certain number of selected offers.
The destroy operators are parametrized by a relative size $0<r^{\mathrm{des}}<1$,
which determines the amount of mobility offers to be deselected.
We propose three different approaches:

\begin{description}

	\item[Random Destroy]
		This operator deselects mobility offers which are chosen uniformly at random from the set of selected offers.

	\item[Time Interval Destroy]
	This operator deselects offers within a certain time interval.
	The idea is to remove offers whose journey intervals are close to each other or overlapping.
	The absolute size of the time interval is
	$r^{\mathrm{des}}_{\mathrm{a}} = \lceil (\max_{o\in O}{b_o} - \min_{o\in O}{a_o}) \cdot r^{\mathrm{des}} \rceil$ and the start of the interval $t_a$ is chosen uniformly at random from $[\min_{o\in O}{a_o}, \max_{o\in O}{b_o}]$.
	In case $t_a+r^{\mathrm{des}}_{\mathrm{a}} > \max_{o\in O}{b_o}$, the time interval goes beyond the considered time horizon.
	Then, this part of the time interval starts from the beginning of the time horizon,
	and also comprises the time interval
	$[\min_{o\in O}{a_o}, \min_{o\in O}{a_o} + t_a+r^{\mathrm{des}}_{\mathrm{a}} - \max_{o\in O}{b_o}]$.
	All mobility offers whose journey intervals overlap the chosen time interval are deselected.

	\item[Demand Conflict Graph Destroy]
	The idea of this operator is to remove offers from those demands which potentially affect each other.
	For this purpose, the demand conflict graph described in Section~\ref{ssec:demandConflictGraph} is used.
	For choosing the nodes to be deselected, a start node is chosen uniformly at random.
	From this start node, a breadth-first search is started until
	the number of visited nodes is equal to the number of offers to remove $r^{\mathrm{des}}_{\mathrm{a}} = \lceil r^{\mathrm{des}}|D|\rceil$
	or all nodes of the connected component of the start node are visited.
	If all nodes in the connected component are visited but the number of offers to remove has not yet been reached,
	another start node from another connected component is chosen and the breadth-first search is started anew beginning from this node.
	This procedure is repeated until $r^{\mathrm{des}}_{\mathrm{a}}$ nodes are visited.
	Then, the selected offer of the corresponding demand of each visited node is deselected.

\end{description}
\vspace{0.1em}

\subsubsection{Repair Operators}
\label{sssec:repair}
Two different variants of repair operators are used.
\redtwo{One} is based on the ILP model
introduced in Section~\ref{ssec:ilp}
and the other on the greedy heuristic described in Section~\ref{ssec:greedy}.

\begin{description}

\item[Exact Repair]
	The exact repair operator uses the ILP model from Section~\ref{ssec:ilp} and chooses the offers to select exactly.
	Assume $O'\subset O$ are the still selected offers.
	Then, we add the constraints \mbox{$x_o = 1$ $\forall \, o \in O'$} to the model and re-solve it using the mixed integer linear programming solver.

\item[Greedy Repair]
	The greedy repair method is based on the greedy heuristic introduced in Section~\ref{ssec:greedy}. 
	By using Algorithm~\ref{alg:greedy} for the repair method as in the initial solution generation,
	the algorithm would always end up in the same solution no matter which offers are removed.
	Therefore, a randomization is introduced guided by the parameter $0<r^{\mathrm{rep}}<1$
	which determines the size of a \emph{restricted candidate list} (RCL),
	commonly used in the context of a \emph{Greedy Randomized Adaptive Search Procedure}~(\cite{Feo1995}).
	Assume we are given an ordering $(d_1,\dots,d_k)$ with $k\leq |D|$ of the demands which have not been assigned an offer yet.
	Instead of choosing the demands in exactly this order,
	the demand to consider next is chosen from the RCL,
	which is composed of the next $\lceil r^{\mathrm{rep}} |D| \rceil$ demands, uniformly at random. 
	Then, like in the greedy heuristic, the cheapest feasible offer for this demand is chosen.
	Each sort criterion for the demands from Section~\ref{ssec:greedy} can be used.

\end{description}

\section{Vehicle Classes}
\label{sec:vehicleclasses}

Often, one would expect a corporate fleet of vehicles to include many vehicles that could be used interchangeably.
This assumption allows to eliminate symmetries not exploited in the approaches presented so far in this paper.
Section~\ref{sec:definitionMOAPVC} provides an extended problem description which introduces vehicle classes.
Section~\ref{sec:ILPMOAPVC} proposes an adapted ILP model which makes use of this additional information.
Since this adapted ILP model proved to be very efficient in computational experiments (see Section~\ref{sec:results}),
no additional heuristic methods which exploit the information on vehicle classes were developed.
In Section~\ref{sec:vehicleClassDiscussion}, this modeling is compared to the MOAP of Section~\ref{sec:modeling} and
it is discussed in which cases the underlying assumption of vehicle interchangeably applies in practice.

\subsection{Modified Problem Description}
\label{sec:definitionMOAPVC}

The problem description of Section~\ref{sec:modeling} is modified as follows.
As an additional input,
we are given a set of \emph{vehicle classes}~$W$.
A given mapping $\varphi : V \rightarrow W$ assigns each vehicle~$v \in V$ to a vehicle class~\mbox{$\varphi(v) \in W$}.
All vehicles of the same class must be indistinguishable, i.e.,
there must exist an journey interval and cost preserving bijection between the offers of two arbitrary vehicles from the same class.
All mobility offers requiring a vehicle from the same class are subsumed
in an \emph{abstract mobility offer}~\mbox{$o \in \bar{O}$}, assigned to the vehicle class $w_o \in W \cup \set{\ast}$,
replacing the vehicle $v_o \in V$ assigned to each offer in the original problem definition.
As before, $w_o = \ast$~denotes that an abstract mobility offer~$o \in \bar{O}$ does not require any vehicle.
The problem is to select exactly one abstract mobility offer for each demand
such that the total cost of the selected offers is minimal, 
and to choose for each selected offer~\mbox{$o \in \bar{O}$} with $w_o \neq \ast$
a vehicle $s_o \in \varphi^{-1}(w_o)$ while overall feasibility is ensured.
Feasibility is given if for each pair~\mbox{$o, p \in \bar{O}$} of selected offers
assigned to the same vehicle $s_o = s_p \in V$,
it holds that $T_{o}\cap T_{p} =\emptyset$,
i.e., the journey intervals of all selected offers
that use the same vehicle do not overlap.
We refer to this problem as the \emph{Mobility Offer Allocation Problem with Vehicle Classes} (MOAPVC).

\subsection{Integer Linear Programming Model}
\label{sec:ILPMOAPVC}
The ILP model introduced in the following selects abstract mobility offers.
It is ensured that the number of simultaneous selections of abstract mobility offers that use the same vehicle class
does not exceed the number of vehicles available in that class.
From the problem definition, it follows directly that vehicles of the same class have identical conflict graphs.
Thus, we obtain a conflict graph~$G^w$ for each vehicle class~$w$.
Analogously to Section~\ref{ssec:ilp},
we denote the set of maximum cliques in the conflict graph~$G^w$ of a vehicle class~\mbox{$w \in W$} by $C^w$.
Decision variables $x_o \in \set{0, 1}$ determine whether an abstract mobility offer~\mbox{$o \in \bar{O}$} is selected.
The number of vehicles in a vehicle class~$w \in W$ is determined by~$\abs{\varphi^{-1}(w)}$.

\begin{align}
	  min \sum_{o \, \in \, \bar{O}} x_o \cdot c_o & & \label{ilp:obj2}\\
s.t.	\sum_{o \, \in \, \bar{O}_d} x_o \, = & ~1 & \forall \, d \in D \label{ilp:ca}\\
	\sum_{o \, \in \, K} x_o \, \leq & ~\abs{\varphi^{-1}(w)} & \forall \, w \in W, \; \forall \, K \in C^w \label{ilp:cb}\\
	x_o \, \in & ~\set{0,1} & \forall \, o\in \bar{O}
\end{align}

The objective function (\ref{ilp:obj2}) remains unchanged and minimizes the sum of the costs of chosen offers.
As before, constraints (\ref{ilp:ca}) ensure that for each mobility demand exactly one offer is chosen.
Constraints (\ref{ilp:cb}) prevent that more vehicles than available are used at the same time.

A solution of this ILP model only provides an assignment of selected offers to vehicle classes, but not to individual vehicles.
From a solution of this ILP, an assignment to individual vehicles can be computed as follows.
For each vehicle class, we consider the interval graph that contains
the journey intervals of all selected abstract mobility offers which require that vehicle class.
Assigning offers to vehicles then corresponds to finding minimum colorings
of these interval graphs with colors corresponding to vehicles.
The polynomial-time algorithm of \cite{Gupta1979} provides an efficient procedure for this task.

\redtwo{In practical scenarios and in current literature there often exists a vehicle hierarchy. 
This results in upgradable vehicles such that a vehicle of a higher class can be used instead of one of the requested class.
This feature is implicitly covered by this modeling approach by generating additional offers for each vehicle class higher than the requested class.
Additionally, the costs of these offers for higher classes should be increased to impede their selection.}

\subsection{Discussion}
\label{sec:vehicleClassDiscussion}

In application scenarios where vehicle interchangeability is given,
the ILP model proposed in this section can help to compute solutions more efficiently.
In cases where vehicle interchangeability is not given,
methods from Section~$\ref{sec:solutionAlgorithms}$ can be applied.
Note that instances of the MOAP and the MOAPVC can be directly transformed into each other
by either omitting vehicle class information or
by introducing artificial vehicle classes consisting of single vehicles.
Thus, both problem definitions are equally general.

In practice, the identification of vehicles to be used interchangeably depends
not only on distinguishing features such as the number of seats, trunk size, cost, energy consumption, or others,
but also on the possibility for users to choose mobility options based on such features.
In addition to that, vehicle specific appointments for technical inspections or maintenance prevent interchangeability.
An interesting use-case where vehicle interchangeability does not apply
are car sharing systems {where each vehicle is bound to one fixed location}.
There, although many vehicles might be of exactly the same type, most users would accept only vehicles located conveniently,
e.g., close to their home address.
This choice differs for individual users (i.e., demands), thus preventing vehicle interchangeability.
If vehicle classes are not given as an explicit input but are still comprised in many problem instances,
one could try to detect them automatically, e.g.,
following the linear time approach of \cite{lueker1979} for deciding interval graph isomorphisms.

\section{Experimental Evaluation}
\label{sec:results}

All algorithms presented in Section~\ref{sec:solutionAlgorithms}
are implemented in Java 1.8.
The general purpose mixed integer linear programming solver IBM ILOG CPLEX Optimizer, version 12.6.2, is used for solving the ILP model.
All numerical experiments were conducted using one core of
an Intel Xeon 2643 machine with 3.3 GHz and 16 GB RAM each running Linux CentOS 6.5.

\subsection{Instances}
\label{sec:instances}

\newcommand{\DU}[1]{$\sim \negthickspace DU[#1]$}

In order to evaluate the presented solution approaches, two sets of instances are generated.
The first set of instances, denoted as \emph{AG}, is created randomly and the created mobility offers have no connection to real-world data.
The second set of instances, denoted as \emph{RW}, is based, to a certain degree, on real-world statistical data and some assumptions about travel behavior.

There are several structural differences in the two instance sets which reflects the generality of the proposed model.
Instance set AG \redtwo{contains} multiple, non-overlapping time windows for mobility demands
so that multiple offers for same vehicles may have different journey intervals.
This enables modeling alternative dates for one demand.
Instance set RW, on the other hand, does not allow alternative dates
but the journey intervals of the mobility offers are considered more realistically
based on spatial, demographic, and economic data from Vienna, Austria.
The start and end time of the mobility offers consider the overall difference
in the demand over the time of the day and also differentiate between weekdays and weekends.

The data for both instance sets and the source code of the instance generators
are made publicly available (\mbox{\url{https://github.com/ait-energy/seamless}}).
This sections \redtwo{gives} an overview of the rather complex instance generation procedures.
\redtwo{We} refer to \ref{app:instance} for a more detailed description of the instance set RW. 
In the following, determining a random number according to a discrete uniform distribution over integers $[a, b]$ is denoted by \DU{a, b}.

\subsubsection{Artificially \redtwo{Generated} Instances (AG)}
First, random instances were created aiming at covering a wide range of scenarios.
This instance generation procedure reflects, to some extent,  the generality of the proposed modeling.
Most parameters of the generator are fixed in order to limit the number of instances.
Four parameters are varied which results in an overall number of 144 parameter combinations.
For each combination, one instance is randomly generated.

The instances aim at representing scenarios with a mixed fleet of vehicles and mobility demands representing a variety of situations.
A fixed \emph{number of demands}~$|D|$ to be generated
is chosen from the set $\{ 200, 1000,$ $2000, 5000 \}$.
For this number of demands and an expected duration per offer
(derivable from subsequently introduced parameters),
a \emph{fleet utilization rate}~$P_u$ is chosen from $\{ 20\%, 40\%, 60\%, 80\% \}$
and used to determine the overall number of vehicles in the fleet.
Smaller fleet utilization rates lead to more vehicles in the fleet.
Vehicles are classified into four categories
(e.g., representing small car, medium car, large car, and van);
an individual \emph{vehicle type cost factor} is assigned to each of them (2, 3, 4, and 7, respectively).
The number of vehicles per category is determined by fixed
\emph{vehicle category portions} (15\%, 35\%, 35\%, 15\%) of the overall number of vehicles.
Then, for each generated mobility demand,
a \emph{minimum vehicle category} is randomly determined according to this distribution.
Only vehicles of this or a higher category can be used to fulfill this mobility demand.
For each vehicle in a suitable category,
mobility offers are generated with a \emph{vehicle acceptance probability}~$P_a$
chosen from $\{ 0.4, 0.6, 0.8 \}$.
\red{Vehicles are chosen as suitable offers with that probability}.
Possible choices between multiple dates for the same appointment
are included by creating multiple offers with different journey intervals
for the same vehicle.
This number of created mobility offers per vehicle is determined 
by a \emph{number of journey intervals}~\DU{1, 3}.

All dates in the generated instances are represented as integers denoting hours.
Mobility demands are generated for a \emph{planning horizon}~$H$ of four weeks ($H = 24 \cdot 7 \cdot 4 = 672$).
Then, with a \emph{long demand probability} $P_l$ chosen from $\{ 0.01, 0.02, 0.05 \}$,
it is determined if the demand is considered to be \enquote{long}.
For a demand to be considered as \enquote{long} means
a \emph{base duration}~\DU{7, H} is chosen;
otherwise, a base duration~\DU{1, 6} is chosen.
This base duration of a demand predominantly determines
the duration of the journey interval of its offers.
For each mobility offer to be generated (i.e., each journey interval and vehicle),
a \emph{relative start date} is chosen from~\DU{2, 168}.
Finally, the cost of a mobility offer is determined by choosing, once for each demand,
a cost per time factor~\DU{10, 30}.
This is then used to determine the cost of each offer
by multiplying it with the duration of its journey interval
and the relative cost of the category of the used vehicle.

\begin{table}
\centering
\caption{Parameters used for generating the 144 random instances of instance set AG.}
\begin{tabular}[]{ccc}
\toprule
number of demands & \enskip $|D|$ \enskip & 200, \enskip 1000, \enskip 2000, \enskip 5000 \\
fleet utilization rate & $P_u$ & 20\%, \enskip 40\%, \enskip 60\%, \enskip 80\% \\
vehicle acceptance probability & $P_a$ & 0.4, \enskip 0.6, \enskip 0.8  \\
long demand probability & $P_l$ & 0.01, \enskip 0.02, \enskip 0.05 \\
\bottomrule
\end{tabular}
\vspace{0.5em}
\label{tab:generatorParameters}
\end{table}

\begin{table}
\centering
\caption{Instance sizes in numbers of vehicles and offers for instance set AG \\ with the parameters $P_a = 0.6$ and $P_l = 0.02$.}
\begin{tabular}[]{ccrrrr}
\toprule
& & \multicolumn{1}{c}{\enskip $|D| = 200$ \enskip} & \multicolumn{1}{c}{\enskip $|D| = 1000$ \enskip} & \multicolumn{1}{c}{\enskip $|D| = 2000$ \enskip} & \multicolumn{1}{c}{\enskip $|D| = 5000$ \enskip} \\
\midrule
\multirow{2}{*}{$P_u = 20\%$} & $|O|$ & 1578 & 21254 & 76427 & 445103 \\
& $|V|$ & 10 & 36 & 70 & 172 \\
\multirow{2}{*}{$P_u = 40\%$} & $|O|$ & 1125 & 11869 & 41265 & 229731 \\
& $|V|$ & 6 & 18 & 36 & 86 \\
\multirow{2}{*}{$P_u = 60\%$} & $|O|$ & 973 & 9036 & 29958 & 157753  \\
& $|V|$ & 4 & 12 & 24 & 58 \\
\multirow{2}{*}{$P_u = 80\%$} & $|O|$ & 977 & 7836 & 23910 & 123765 \\
& $|V|$ & 4 & 10 & 18 & 44 \\
\bottomrule
\end{tabular}
\vspace{0.5em}
\label{tab:instanceSizesV1}
\end{table}

Supplementary to offers using the considered fleet,
mobility offers representing the utilization of public transportation or taxis are included.
Thus, suitable offers which do not require any vehicle are generated.
For reflecting the usage of a taxi,
or alternatively the regret cost of not fulfilling a demand at all,
an additional mobility offer is generated \redtwo{for each mobility demand}.
Its costs are calculated based on the cost of the mobility offer (of the same demand)
with the minimum vehicle category.
The cost of this offer is set to
a \emph{taxi cost percentage}~\DU{300, 600} of the base cost.
With a \emph{public transportation probability} of $0.5$,
an additional offer is created representing a public transportation based route.
The cost of this offer is set to a \emph{public transportation cost percentage}~\DU{100, 300} of the base cost.

Table~\ref{tab:generatorParameters} provides an overview
of all varying instance generation parameters,
which yield an overall number of 144 possible combinations.
Table~\ref{tab:instanceSizesV1} shows the number of offers and vehicles
for the 16~instances generated with
a vehicle acceptance probability of $P_a = 0.6$ and
a probability for long demands $P_l = 0.02$.

\subsubsection{Instances based on Real-World Requirements (RW)}
\label{sssec:instancesV2}
The second set of instances is based on spatial, demographic, and economic data of Vienna, Austria.
The instance generation is based on a defined set of transport modes which are categorized into \emph{foot}, \emph{public transport}, \emph{bike}, \emph{battery electric vehicle (BEV)} and subtypes corresponding to specific car models, \emph{internal combustion engine vehicle (ICEV)} and subtypes corresponding to the size of the vehicle, and \emph{taxi}.
Each of these modes is defined by attributes like CO$_2$ emissions per distance, cost per time and distance, amount of additional time needed for setup (e.g., getting to the car, time needed for parking) which together defines a cost function.

\redtwo{For} each single benchmark instance an artificial company is constructed with $|P|$ employees and a number of vehicles for each transport mode with limited resources depending on $P$ and given by the instance parameter $\nu\in [0,1]$.
For each transport type with limited availability such as cars and bikes, there are \DU{0,\lfloor\nu P\rfloor} such vehicles available.
For each person $p\in P$, a typical work week with work-related and private events is constructed which forms the set of mobility demands of $p$.
\redtwo{Based} on the personal preferences of~$p$ regarding mode of transport (which depend on statistical data, e.g., regarding gender and probability of owning a driving license), for each acceptable transport mode, one mobility offer is created for each resource of that mode with the corresponding journey interval and cost.
The journey interval depends on the time given by the mobility demand and the travel and setup time of the corresponding mode of transport.
The cost consists of three factors:
the cost given by the distance,
the cost given by CO$_2$ emissions, and
the cost given by the time.
While the \red{costs for distance and CO$_2$} are fixed for a specific mobility demand and mode of transport,
the \red{costs for time} are only considered for business mobility demands and based on average salaries.

For each combination of company size $|P|\in \{500, 750, 1000, 1250, 1500, 1750\}$ and
relative number of vehicles $\nu\in \{0.05, 0.10, 0.15\}$ 30 instances are created. 
\redtwo{Table~\ref{tab:instanceSizesV2} provides} an overview of the average size of the instances regarding
the number of offers and number of vehicles.
\ref{app:instance} provides a more detailed description of this instance generation procedure.

\begin{table}[htbp]
  \centering
  \caption{Instance sizes in average numbers of vehicles and offers for instance set RW.}
    \begin{tabular}{c|lrrr||c|lrrr}
          &       & $\nu = $ 0.05  & $\nu = $ 0.1   & $\nu = $ 0.15  &       &       & $\nu = $ 0.05  & $\nu = $ 0.1   & $\nu = $ 0.15 \\
    \midrule
    \multirow{2}[1]{*}{$|P| = 500$} & $|O|$   & 83783.6 & 175242.2 & 264657.2 & \multirow{2}[1]{*}{$|P|$ = 1250} & $|O|$   & 526631.2 & 980385.6 & 1462976.7 \\
          & $|V|$  & 71.1  & 152.0 & 225.3 &       & $|V|$   & 190.8 & 350.2 & 566.7 \\
    \multirow{2}[0]{*}{$|P|$ = 750} & $|O|$   & 177830.6 & 368634.7 & 537975.3 & \multirow{2}[0]{*}{$|P|$ = 1500} & $|O|$   & 747897.3 & 1580761.5 & 2245212.8 \\
          & $|V|$  & 100.8 & 209.1 & 332.9 &       & $|V|$  & 227.1 & 472.6 & 682.9 \\
    \multirow{2}[0]{*}{$|P|$ = 1000} & $|O|$   & 349469.7 & 643205.2 & 942703.7 & \multirow{2}[0]{*}{$|P|$ = 1750} & $|O|$  & 940197.0 & 1936257.1 & 3085115.9 \\
          & $|V|$   & 155.7 & 293.5 & 427.8 &       & $|V|$  & 249.8 & 538.4 & 820.4 \\
    \end{tabular}%
  \label{tab:instanceSizesV2}%
\end{table}%

\let\DU\undefined

\subsection{Computational Results of the ILP Model}

\begin{table}[bt]
  \centering
  \caption{Computational results of the exact solution approach using the ILP model for instance set AG.} \medskip
    \begin{tabular}{rr|rrr||rr|rrr}
    \multicolumn{5}{c}{Aggregated by $|D|$ }      & \multicolumn{5}{c}{Aggregated by $P_l$}  \\ \midrule
    $|D|$ & \#I & \#S & $\overline{\mathrm{gap}}$ & $\overline{t}$[s] & $P_l$ & \#I  & \#S & $\overline{\mathrm{gap}}$ & $\overline{t}$[s] \\ \midrule   
   
    200   & 36 & 36 & 0.00\% & $<1$     & 0.01  & 48 & 36 & 6.94\% & 1054 \\
    1000  & 36 & 36 & 0.00\% & 100   & 0.02  & 48 & 32 & 11.13\% & 1295 \\
    2000  & 36 & 22 & 0.13\% & 1652  & 0.05  & 48 & 30 & 20.88\% & 1465 \\
    5000  & 36  & 4 & 51.81\% & 3333  &       &       &       &  \\ \midrule \midrule

		\multicolumn{5}{c}{Aggregated by $P_u$ }      & \multicolumn{5}{c}{Aggregated by $P_a$ }  \\ \midrule
		$P_u$ & \#I & \#S & $\overline{\mathrm{gap}}$ & $\overline{t}$[s] & $P_a$ & \#I & \#S & $\overline{\mathrm{gap}}$ & $\overline{t}$[s] \\ \midrule   
    
    20\%    & 36 & 31 & 2.78\% & 802   & 0.4    & 48 & 33 & 8.91\% & 1240 \\
    40\%    & 36 & 21 & 22.54\% & 1648  & 0.6    & 48 & 31 & 13.05\% & 1329 \\
    60\%    & 36 & 22 & 17.62\% & 1420  & 0.8    & 48 & 34 & 17.00\% & 1244 \\
    80\%    & 36 & 24 & 9.00\% & 1215  &       &       &       &  \\

    \end{tabular}%
  \label{tab:exactResultsV1}%
\end{table}%

\begin{table}[htbp]
  \centering
  \caption{Computational results of the exact solution approach using the ILP model for instance set RW.}\medskip
    \begin{tabular}{rr|rrcrrr|rrcrrr}
    \multicolumn{1}{l}{$\nu$} & \multicolumn{1}{l}{\#I} & \multicolumn{1}{l}{$|P|$} & \multicolumn{1}{l}{\#S} & \multicolumn{1}{l}{\#N/S} & \multicolumn{1}{l}{$\overline{\mathrm{gap}}$} & \multicolumn{1}{l}{$\overline{t}$[s]} & \multicolumn{1}{l|}{$\overline{t^*}$[s]} & \multicolumn{1}{l}{$|P|$} & \multicolumn{1}{l}{\#S} & \multicolumn{1}{l}{\#N/S} & \multicolumn{1}{l}{$\overline{\mathrm{gap}}$} & \multicolumn{1}{l}{$\overline{t}$[s]} & \multicolumn{1}{l}{$\overline{t^*}$[s]} \\
   \midrule
    0.05  & 30    &       & 30    & -     & 0.0\% & 23    & 1     &       & 26    & \multicolumn{1}{c}{-} & 0.6\% & 1593  & 3 \\
    0.1   & 30    & 500   & 30    & -     & 0.0\% & 55    & 1     & 1250  & 15    & \multicolumn{1}{c}{-} & 38.7\% & 2575  & 3 \\
    0.15  & 30    &       & 30    & -     & 0.0\% & 70    & 1     &       & 15    & \multicolumn{1}{c}{-} & 51.9\% & 2639  & 3 \\
    \midrule
    0.05  & 30    &       & 30    & -     & 0.0\% & 131   & 1     &       & 18    & \multicolumn{1}{c}{-} & 43.3\% & 2445  & 3 \\
    0.1   & 30    & 750   & 30    & -     & 0.0\% & 330   & 1     & 1500  & 7     & \multicolumn{1}{c}{-} & 60.6\% & 3350  & 4 \\
    0.15  & 30    &       & 30    & -     & 0.0\% & 299   & 1     &       & 8     & \multicolumn{1}{c}{-} & 75.5\% & 3150  & 4 \\
    \midrule
    0.05  & 30    &       & 30    & -     & 0.0\% & 651   & 2     &       & 10    & \multicolumn{1}{c}{-} & 45.7\% & 3060  & 4 \\
    0.1   & 30    & 1000  & 26    & -     & 0.4\% & 1238  & 2     & 1750  & 5     & 2     & 57.3\% & 3352  & 5 \\
    0.15  & 30    &       & 28    & -     & 42.1\% & 1728  & 2     &       & 2     & 6     & 75.4\% & 3395  & 5 \\

    \end{tabular}%
  \label{tab:exactResultsV2}%
\end{table}%

Extensive computational experiments using \redtwo{both instance sets} were performed.
For determining CPLEX parameters, we used the built-in parameter tuning tool on the
set of training instances also used for ALNS parameter tuning (see Section~\ref{sec:ALNSresults}) with a time limit of one
hour. It turned out that the default parameters of CPLEX worked best.
Tables~\ref{tab:exactResultsV1} and~\ref{tab:exactResultsV2} show results obtained by
the exact solution approach using the ILP model from Section~\ref{ssec:ilp} for instance set AG and RW, respectively.
In Table~\ref{tab:exactResultsV1} results are aggregated over the instance generation parameters $|D|$, $P_u$, $P_a$, and $P_l$ and in Table~\ref{tab:exactResultsV2} the results are grouped by the instance parameters $|P|$ and $\nu$.
A time limit of at most one hour per instance was used.
The number of instances per group is given in columns~\enquote{\#I}.
\redtwo{C}olumns~\enquote{\#S} provide the number of solved instances.
\redtwo{C}olumns~\enquote{$\overline{\mathrm{gap}}$} provide the average optimality gap between the objective function value of the found solution and the lower bound in percent.
\redtwo{C}olumns~\enquote{\#N/S} provide the number of instances in which no feasible solution was found, either because of the time or the memory limit.
\redtwo{C}olumns~\enquote{$\overline{t}$[s]} provide the average runtime in seconds over all instances that were solved to optimality.
For the instance set RW, columns~\enquote{$\overline{t^*}$[s]} show the average time needed to solve the instances with the ILP model utilizing vehicle classes as described in Section~\ref{sec:ILPMOAPVC}.
\redtwo{Note that the optimal solution value for both variants with and without vehicle classes remains equal and that the time needed for computing actual vehicle assignments based on the vehicle class assignments is negligible.}

In instance set AG, all small and medium instances ($|D| \leq 2000$)
are either solved to optimality or the obtained solution shows a very small gap.
Only four large instances ($|D| = 5000$) are solved to optimality.
The large gaps indicate that the solution found for the larger instances are quite poor.
\redtwo{For} $P_u = 20~\%$ and for $P_u = 80~\%$, more instances are solved to optimality or only a small gap is obtained.
Instances with lower vehicle utilization rates ($P_u = 20~\%$)
\redtwo{are easier to solve because} they have less conflicts between offers.
Instances with higher vehicle utilization rates ($P_u = 80~\%$)
\redtwo{have fewer} feasible \redtwo{solutions} which makes \redtwo{them} easier to solve as well.
The impact of the parameters~$P_l$ and~$P_a$ is less clear.
\redtwo{Long mobility demands increase the number of conflicts}
which explains that instances with large values of~$P_l$ \redtwo{are harder to solve}.
Instances with lower values for~$P_a$ seem to be easier,
which might \redtwo{result from} fewer decision variables.

In instance set RW, CPLEX was able to solve all instances with $|P|\in \{500,750\}$ to optimality
within the time limit \redtwo{of one hour}. 
The larger instances with $|P|\geq 1250$ are harder to solve.
\redtwo{The} difficulty increases with $|P|$ but also with $\nu$.
The large optimality gaps for these instances and the high \redtwo{runtimes} show the necessity of a fast and efficient heuristic algorithm.
For the largest instance set with $|P|=1750$ and $\nu\geq 0.10$,
CPLEX could not \redtwo{even} find any feasible solution in 8~cases. 
When \redtwo{using} vehicle classes \redtwo{instead},
all instances can be solved within a few seconds.
As described before, this model has much less variables and constraints and can therefore be solved faster.
Interchangeable vehicles only exist for instances RW, therefore we only report the results for this instance class.

\subsection{Computational Results of the ALNS}
\label{sec:ALNSresults}

For evaluating the (A)LNS and its operators proposed in Section~\ref{ssec:LNS} we first have to select appropriate parameters.
We choose them by using the parameter tuning tool \emph{irace}~\cite{lopez2016irace} which iteratively samples the parameter space, evaluates the samples, and discards them if a Friedman test shows that it is dominated by other parameter configurations. 
Therefore, we generate a new training set of instances based on instance set RW described in Section~\ref{sssec:instancesV2} with $|P|=500$ and $\nu\in\{0.05,0.10,0.15\}$.
For each of these 3 combinations, 10 instances are created resulting in a training set consisting of 30 instances.
On these instances, irace is executed with a total of 20000 runs and a time limit of 1 minute per run.
To get more detailed data about the performance of the different LNS configurations,
one run of irace is performed for each destroy operator (\emph{Random}, \emph{Time Interval}, and \emph{Demand Conflict Graph}).
\redtwo{Based} on these results, one run of irace is performed for the ALNS utilizing all three destroy operators each with $r^\mathrm{des}=0.15$ and with the two repair operators, \emph{exact} repair and \emph{greedy} repair with sorting criterion \emph{MaxMinCost} and $r^\mathrm{rep}=0.1$.
The parameter space in which irace operates is shown in Table~\ref{tab:paramTuning}.

\begin{table}
\centering
\caption{Parameter tuning scenario.}
\begin{tabular}{p{20em}|p{20em}}
Description & Possible Values \\ \midrule
Repair method & \{\textit{greedy, exact}\} \\
Sort criterion for greedy repair & \{\textit{MinMinCost, MaxMinCost, \newline  MinMinCostPerTime, MaxMinCostPerTime, \newline MinAveCost, MaxAveCost} \} \\
The relative amount of solution parts that are destroyed $r^\mathrm{des}$ & [0.1,0.9] \\
Relative size of the RCL $r^\mathrm{rep}$ for the greedy repair & [0.1,0.9] \\
$\sigma_1, \sigma_2, \sigma_3$ & [1,100] \\
Decay parameter $\lambda$ & [0.01,0.99] \\
Initial temperature control parameter $\omega$ & [0.00001, 0.1]  \\
Initial temperature acceptance parameter $p^\omega$ & [0.01,0.99] \\
Cooling rate $c$ & [0.1,0.999999]
\end{tabular}
\label{tab:paramTuning}
\end{table}

The resulting parameter configurations are shown in Tables~\ref{tab:paramTuningResults} and~\ref{tab:alnsParamTuningResults}.
Note that the $\sigma$-values may seem counter-intuitive at first glance because $\sigma_1<\sigma_2<\sigma_3$.
However, this can be explained by the following observations:
First, a higher $\sigma_2$ and $\sigma_3$ value encourages diversification especially at the beginning of the algorithm because then it is more likely to accept a non-improving solution candidate. 
Second, for the exact repair operator the value of $\sigma_3$ is irrelevant because the generated solution candidate cannot be worse than the solution candidate before the destruction.
Therefore, a higher $\sigma_3$ value encourages the use of the greedy repair method which is more likely to improve the solution at the beginning of the algorithm.
In the remaining section, we refer to these values when mentioning the \emph{Random}, \emph{Time Interval}, \emph{Demand Conflict Graph}, or \emph{ALNS} configuration.

\begin{table}
\centering
\caption{LNS Parameter tuning results.}
\begin{tabular}{l|llll}
Destroy method (fixed) & Repair method & Sort criterion & $r^\mathrm{rep}$ & $r^\mathrm{des}$\\ \midrule
Random & \emph{greedy} & \emph{MaxMinCost} & 0.1308 & 0.1580 \\
Time Interval & \emph{greedy} & \emph{MaxMinCost} & 0.1063 & 0.1053\\
Demand Conflict Graph & \emph{exact} & - & - & 0.1836 \\
\end{tabular}
\label{tab:paramTuningResults}
\end{table}

\begin{table}
\centering
\caption{ALNS Parameter tuning results.}
\begin{tabular}{l|lllllll}
Method & $\sigma_1$ & $\sigma_2$ & $\sigma_3$ & $\lambda$ & $\omega$ & $p^\omega$ & $c$ \\ \midrule
ALNS & 23 & 40 & 50 & 0.2377 & 0.0373 & 0.656 & 0.2267
\end{tabular}
\label{tab:alnsParamTuningResults}
\end{table}

As expected, the performance of the greedy algorithms strongly depends on the sorting criterion that is used.
Figure~\ref{fig:greedyV1} compares the greedy algorithms on instance set AG normalized by relative differences to the best solution obtained by all algorithms for the corresponding instance.
Additionally, we compare our proposed greedy algorithms to the greedy algorithm \emph{G1-MW} proposed in~\cite{ng2014graph},
where \emph{G1-MW} is the best performing heuristic
evaluated in that paper (see Table~2, \cite{ng2014graph}).
The greedy algorithm \emph{G1-MW} always chooses the cheapest feasible offer from all remaining, not yet satisfied demands.
\emph{MaxMinCost} outperforms all other sorting criteria and yields a starting solution of about $10\%$ worse than the best found solution on average.
\emph{MinMinCost}, \emph{MinMinCostPerTime}, \emph{MinAveCost}, and \emph{Random} perform poorly and are omitted for readability.
\emph{MaxAveCost} and \emph{MaxMinCostPerTime} cannot compete with \emph{MaxMinCost}.
Algorithm \emph{G1-MW} also does not perform well.
We assume this is mainly because it is too greedy in the sense that it does not consider the consequences of always choosing the currently cheapest offer.
The computational time per instance for all variants of the greedy algorithms is always below one second per instance.

\begin{figure}[htb]
	\centering
    \includegraphics[width=0.9\textwidth]{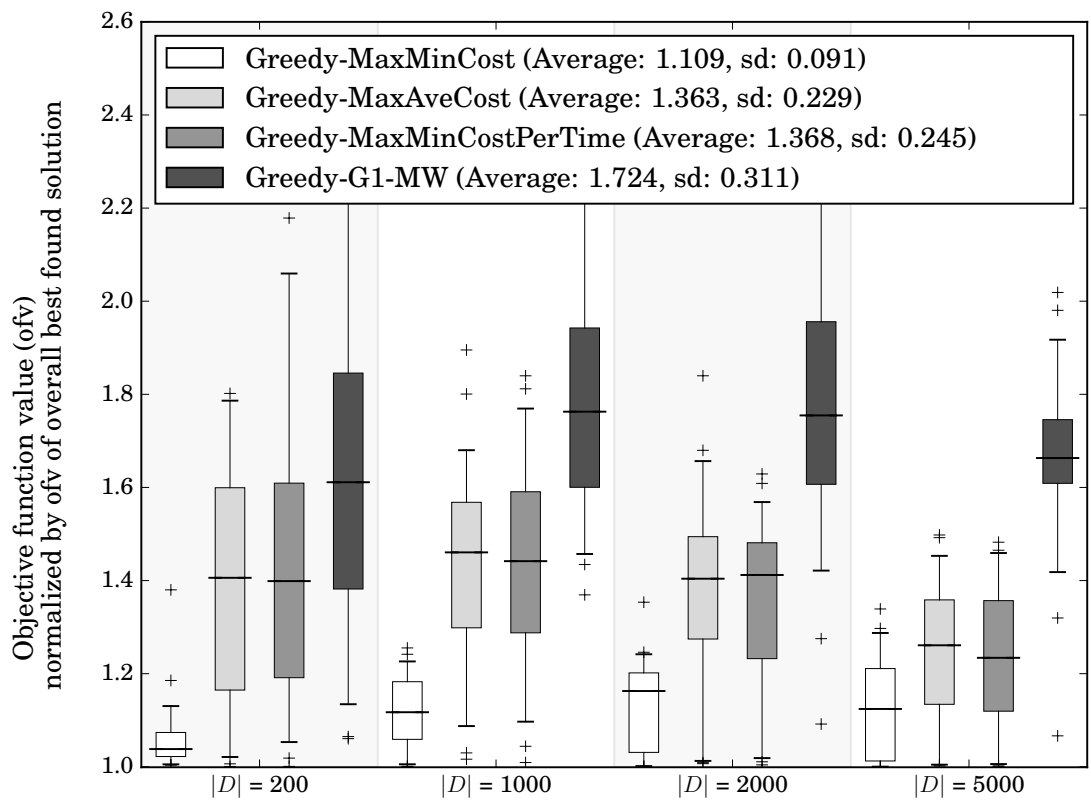}
    \caption{Comparison of different greedy construction heuristics for instance set AG.}
    \label{fig:greedyV1}
\end{figure}

To compare all proposed approaches at a glance, Figures~\ref{fig:lnsV1} and~\ref{fig:lnsV2} show a comparison of the (A)LNS configurations \emph{Random}, \emph{Time Interval}, \emph{Demand Conflict Graph}, and \emph{ALNS} along with the results obtained by CPLEX for instance sets AG and RW, respectively.
All runs of the (A)LNS in the following tests are executed with a time limit of 5 minutes each.
Regarding the results of instance set AG, for most of the small and medium instances CPLEX finds exact solutions, but it does not perform well on the large instances.
While on the smallest instances with $|D|=200$ most configurations perform more or less equally good, CPLEX outperforms the other approaches for the medium instances.
The largest instances can be solved best by the LNS configuration \emph{Demand Conflict Graph} and the \emph{ALNS} which both are on average within 1.5-1.8\% of the best found solution. 
Note that the bad results by CPLEX on the larger instances come from the time limit of 1 hour, after which the execution is halted and the best feasible solution is reported.
\emph{Random} and \emph{Time Interval} have similar performance although \emph{Random} seems to give better results on average by about 2.1\%.
Overall, depending on the available computational time and the instance to be solved, either CPLEX or the \emph{ALNS} is the best suited choice.

\begin{figure}[htb]
	\centering
    \includegraphics[width=0.9\textwidth]{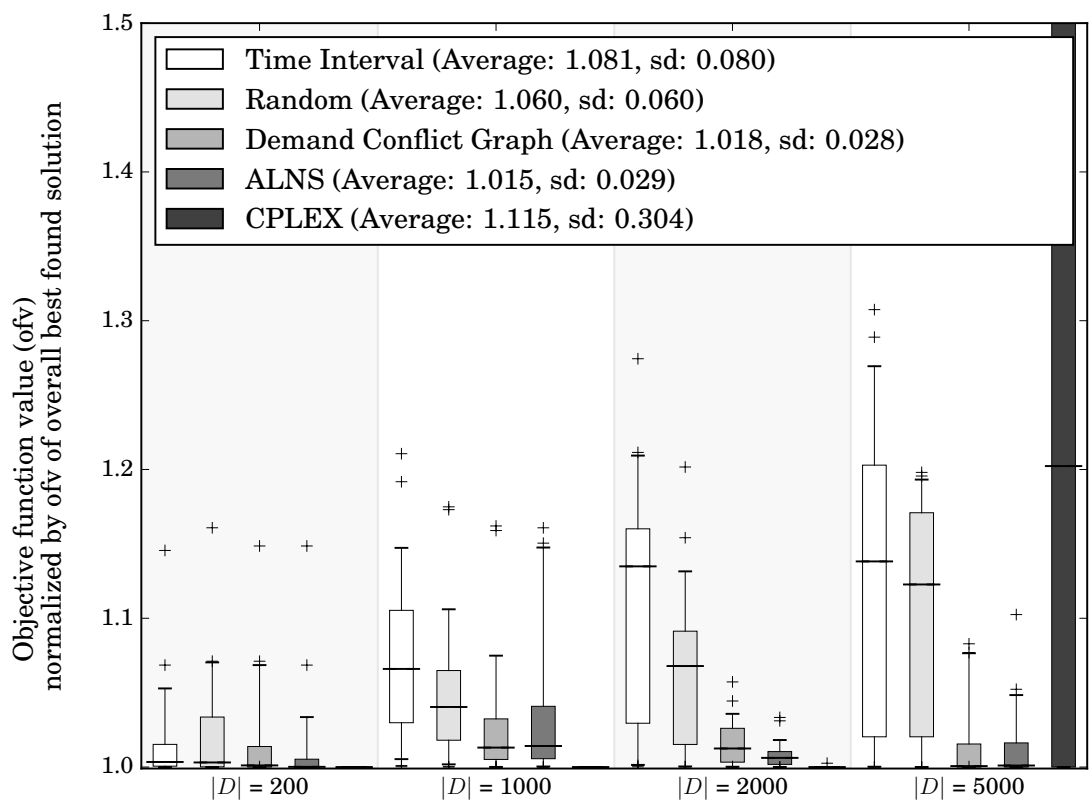}
    \caption{Comparison of the different destroy operators of the LNS, the ALNS, and CPLEX for instance set AG.}
    \label{fig:lnsV1}
\end{figure}

Figure~\ref{fig:lnsV2} shows the results on instance set RW in which
the numerical values are aggregated over the instances with
the same value for $|P|$ and $\nu$ by computing the mean of the relative differences to
the best found objective values.
These results confirm the conclusions from the results of instance set AG to some extent.
\redtwo{In} these instances, CPLEX as well is able to solve the small to medium instances and the ALNS seems to be the best choice for larger instances when comparing the average value over all instances.
For the larger instances, CPLEX and also the LNS configuration \emph{Demand Conflict Graph}
are in some cases not able to find feasible solutions. 
Therefore, we conclude that for smaller instances with $|P| \leq 1000$ the CPLEX model
can be used if the run-time is not that important.
On the other hand, for larger instances and when short run-times are required,
the ALNS is recommended.
Note that although the relative differences are much smaller
in this instance set compared to the previous,
the absolute values of the objective function are much larger due to the structure of the instances.
To summarize, Table~\ref{tab:resultApproachSummary} \redtwo{recommends methods}
for different instance sizes based on the number of employees or the number of demands.

\begin{figure}[htb]
	\centering
    \includegraphics[width=0.9\textwidth]{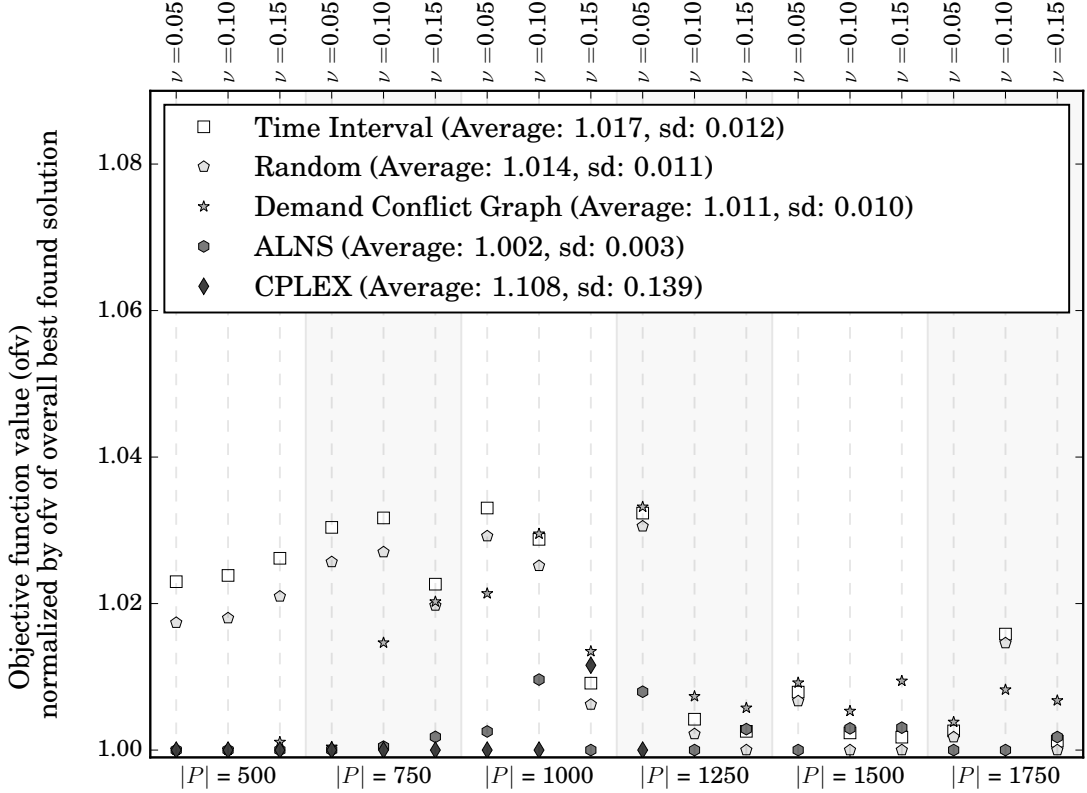}
    \caption{Comparison of the ALNS, the different destroy operators of the LNS, and the results obtained by CPLEX for instance set RW, normalized by best found solution. Additionally, the results are aggregated over the 30 instances per set and sorted by $\nu$ within their category regarding $|P|$ in ascending order.}
    \label{fig:lnsV2}
\end{figure}

\begin{table}
\centering
\caption{\red{Method recommendations for different instance sizes.}}
\red{\begin{tabular}{l|ll}
~ & fast runtime required ($<1$s) & fast runtime not required \\ \midrule
$|P|\leq 1000$ or $|D|\leq 2000$ & \emph{greedy-MaxMinCost} & \emph{CPLEX}  \\
$|P|>1000$ or $|D|>2000$ & \emph{greedy-MaxMinCost} & \emph{ALNS} \\
\end{tabular}}
\label{tab:resultApproachSummary}
\end{table}

\redtwo{To assess} the convergence behavior of the ALNS, Figure~\ref{fig:convergence} shows the decrease over time of the objective function value for 30 independent runs of the ALNS on one instance.
\redtwo{Therefore}, the currently best objective values of each run are collected in discrete time intervals of 10 seconds over the whole run-time of one hour.
We observe a much higher variance between runs during the initial phase of the search.
\redtwo{Even} the worst runs after about 1000~seconds outperform the best runs before 200~seconds.
Thus, performing multiple independent runs with the goal of obtaining a better overall result
can be recommended only if the available time per run is sufficiently high.
\red{Additionally, we evaluated the time needed to find the best solution within a runtime of 1000 seconds for the instances RW.
The results showed that for the smaller (E500, E750) and larger (E1500, E1750) instances the best solution was found in only about 44.81\% of the runtime and for the medium (E1000, E1250) instances in about 68.08\% of the runtime.
This indicates that for these instances the ALNS is usually able to converge to its final value in between 7 and 11 minutes of runtime.}

\begin{figure}[htb]
	\centering
    \includegraphics[width=0.9\textwidth]{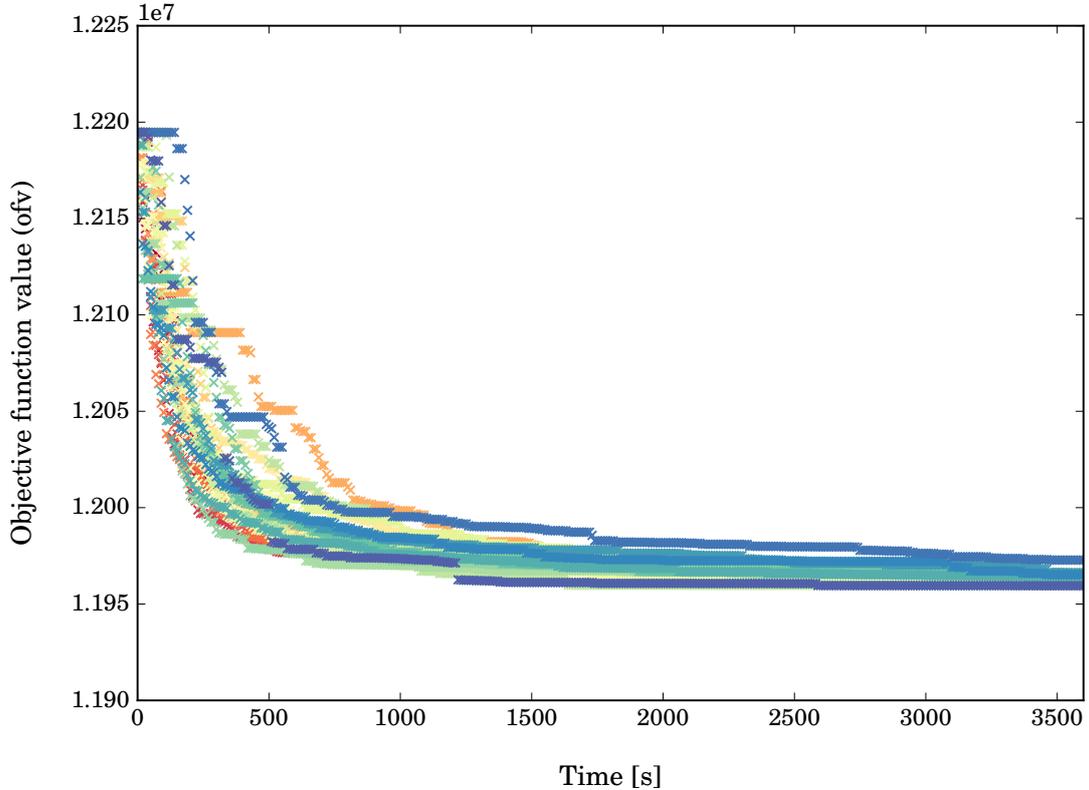}
    \caption{Typical progression of the objective value over time for the ALNS in 30 independent runs of \\ the instance \emph{E1250\_10} with $\nu=0.10$.}
    \label{fig:convergence}
\end{figure}

\red{
Finally, we analyze the properties of the results regarding their application to practical use cases.
Therefore, we computed the average utilization of the shared fleet vehicles $P_u^*$ and the relative number of trips using one of these vehicles $V^u$.
For the average utilization\redtwo{,} we computed the total reservation times of the vehicles and compared these to the whole time horizon of the specific instance.
We show these values for instance set AG \redtwo{in Table~\ref{tab:analysisV1}} and
for instance set RW \redtwo{in Table~\ref{tab:analysisV2}}. 
In the latter table we additionally compare the changes of the traditional fleet size to the shared fleet size $\Delta V$ and the change in the vehicle utilization rates $\Delta P_u^*$.
To compute these values we assumed a classical one-to-one assignment of vehicles to employees, i.e., in the instances with $|P|$ we assumed a fleet size of 500.
}

\begin{table}[bt]
  \centering
  \caption{\red{Fleet size, vehicle utilization, and data about the number of trips using shared vehicles for instance set AG.}} \medskip
    \red{\begin{tabular}{r|rrr||r|rrr}
    \multicolumn{4}{c}{Aggregated by $|D|$ }     & \multicolumn{4}{c}{Aggregated by $P_l$}  \\ \midrule
    $|D|$ & $|V|$ & $P_u^*$ & $V^u$ & $P_l$ & $|V|$ & $P_u^*$ & $V^u$  \\ \midrule     
    200   & 6.7 & 36.53\% &  66.69\%  & 0.01  & 25.9 & 47.60\% & 65.81\%  \\
    1000  & 32.0 & 49.06\% & 72.06\% &0.02  & 38.0 & 47.79\% & 72.19\%  \\
    2000  & 44.7 & 50.19\% & 75.58\% & 0.05  & 73.9 & 45.08\% & 81.19\% \\
    5000  & 109.3 & 51.53\% & 77.92\%  &       &       &       \\ \midrule \midrule
		\multicolumn{4}{c}{Aggregated by $P_u$ }      & \multicolumn{4}{c}{Aggregated by $P_a$}  \\ \midrule
		$P_u$ & $|V|$ & $P_u^*$ & $V^u$ & $P_a$ & $|V|$ & $P_u^*$ & $V^u$ \\ \midrule    
    20\%    & 86.7 & 30.64\% & 92.06\%   & 0.4    & 45.9 & 45.48\% & 70.35\% \\
    40\%    & 44.2 & 47.06\% & 79.19\%  & 0.6    & 45.9 & 46.98\% & 73.48\%  \\
    60\%    & 30.0 & 52.58\% & 66.44\%  & 0.8    & 45.9 & 48.02\% & 75.35\% \\
    80\%    & 22.8 & 57.03\% & 54.56\%  &      &       &       &  \\
    \end{tabular}}%
  \label{tab:analysisV1}%
\end{table}%

\red{
The results of instance set AG show a generally higher utilization rate than
in instance set RW and increasing with higher values of $|D|$ and $|P_u|$.
Since the instance generation procedure of the instance set AG does not distinguish between day and night trips, the demand is more evenly distributed which results in the increase of the utilization rate.
Also, the utilization increases with a higher demand to vehicle ratio and clearly with an increasing input vehicle utilization ratio.
Considering the results of instance set RW, we see a reduction of the fleet size between 50 and 85\% while the average utilization rates are increasing in most cases.
There are, however, some cases in which the vehicle utilization rates are actually lower than in the case of fixed vehicle assignments.
This is because the goal of the algorithm is not to maximize the vehicle usage but to find the most cost-effective mobility offer allocations.
This means that in many cases the alternative offers, e.g., public transport, \redtwo{are} actually better for the company and are therefore chosen over a shared fleet vehicle. 
The same effect can also be observed when looking at the number of offers using fleet vehicles which are also rather low.
If desirable, the objective function of the optimization problem could be replaced by a utility maximization function in order to increase the utilization values $P_u^*$ and $V^u$.
In this work, however, we did not pursue this variant because we believe in the benefit of offering alternative mobility offers to employees.
}

\begin{table}[htbp]
  \centering
  \caption{\red{Fleet size, vehicle utilization, and data about the number of trips using shared vehicles for instance set RW.}}\medskip
    \red{\begin{tabular}{r|crrrr|crrrr}
    \multicolumn{1}{l}{$\nu$} & \multicolumn{1}{l}{$|P|$} & $\Delta V$ & $P_u^*$ & $\Delta P_u^*$ & $V^u$ & \multicolumn{1}{l}{$|P|$} & $\Delta V$ & $P_u^*$ & $\Delta P_u^*$ & $V^u$ \\
   \midrule
    0.05  &       & -85.78\% & 59.33\% & 8.51\% & 22.93\% &       & -84.74\% & 58.50\% & 8.29\% & 23.27\% \\
    0.1   & 500   & -69.60\% & 51.80\% & 0.97\% & 31.67\% & 1250  & -71.98\% & 53.87\% & 3.66\% & 30.17\% \\
    0.15  &       & -54.94\% & 43.87\% & -6.96\% & 21.90\% &       & -54.66\% & 52.66\% & 2.45\% & 31.10\% \\
    \midrule
    0.05  &       & -86.56\% & 60.17\% & 9.45\% & 22.23\% &       & -84.86\% & 58.70\% & 8.42\% & 23.63\% \\
    0.1   & 750   & -72.12\% & 53.47\% & 2.75\% & 35.23\% & 1500  & -68.49\% & 54.89\% & 4.62\% & 30.07\% \\
    0.15  &       & -55.61\% & 45.20\% & -5.52\% & 32.50\% &       & -54.47\% & 52.17\% & 1.89\% & 26.33\%  \\
    \midrule
    0.05  &       & -84.43\% & 57.80\% & 7.47\% & 24.57\% &       & -85.73\% & 58.50\% & 8.22\% & 21.83\% \\
    0.1   & 1000  & -70.65\% & 52.13\% & 1.81\% & 31.53\% & 1750  & -69.23\% & 52.80\% & 2.62\% & 29.53\% \\
    0.15  &       & -57.22\% & 49.83\% & -0.49\% & 33.23\% &       & -53.12\% & 53.63\% & 3.44\% & 27.38\% \\

    \end{tabular}}%
  \label{tab:analysisV2}%
\end{table}%

\section{Conclusion and Outlook}
\label{sec:conclusions}
This paper \redtwo{introduces} the \emph{Mobility Offer Allocation Problem}
\redtwo{for corporate mobility services} and solution algorithms \redtwo{to solve it}.
\redtwo{We propose} a methodology
that integrates a mixed fleet of vehicles with other mobility options
such as public transportation or taxis.
An experimental evaluation shows the trade-offs of the proposed solution methods
regarding computational times and solution quality for different kinds of instances.
The results demonstrate the applicability of the methods for realistic instance sizes \red{and show performance indicators interesting for real world applications}.
For improving the proposed adaptive large neighborhood search,
designing better destroy operators seems worth investigating.
Currently, when choosing demands to be destroyed,
the selected offers are not taken into account. 
Including the potential cost savings for a demand
might lead to more efficient operators.
In practice, such approaches must be applied in a dynamic setting
where demands arrive over time and offers are booked in advance.
There, the idea of delaying assignment decisions in order to increase planning flexibility
provides further practical benefits, e.g., in case of vehicle breakdowns or late returns.
We believe the proposed approaches are applicable in such rolling horizon scenarios.
\redtwo{However}, further evaluating and adapting them in such settings
is a relevant direction of future research.

The conflict graph based modeling of the problem facilitates the inclusion of additional constraints.
For example, a consideration of persons, potentially involved in multiple appointments,
can be included via additional conflict edges.
\red{Another possible extension would be to suggest multiple employees to share a
vehicle if their requests are similar, e.g.,~\cite{enzi2020modeling}. This could be modeled by
introducing offers that satisfy more than one demand.}
A major limitation of the proposed modeling is that
mobility offers refer to fixed journey intervals.
In case the sequencing of offers becomes relevant,
one would obtain a variant of the well-known Vehicle Routing Problem.
Though journey intervals cannot, locations actually can vary in the presented modeling.
Assuming a future scenario with a fleet of self-driving vehicles,
each mobility demand could state a fixed start and end location
specifying a request for a ride between given locations taking a fixed amount of time.
Then, two offers using the same vehicle are in conflict
if driving from the end location of the earlier offer
to the start location of the later offer is not possible
within the given time.
So, an adapted conflict graph based modeling could prove useful also for such scenarios.
The approach also aims at fostering the use of battery electric vehicles
by helping to achieve utilization rates required for compensating high purchase prices.
Recharging processes can either be included simplistically,
by prolonging the journey intervals of mobility offers,
or, a more detailed modeling could combine ideas from this work and that of \cite{Sassi2014},
where battery loading states are modeled explicitly.

Overall, we believe the proposed modeling provides a flexibility
that offers a range of interesting applications not restricted to corporate environments,
e.g., a large housing unit equipped with a fleet of cars shared by the inhabitants.
A~larger scale application would be \redtwo{to implement the}
approach for station based car-sharing providers.

\bibstyle{plain}
\bibliography{literature}

\begin{thebibliography}{47}
\expandafter\ifx\csname natexlab\endcsname\relax\def\natexlab#1{#1}\fi
\providecommand{\url}[1]{\texttt{#1}}
\providecommand{\href}[2]{#2}
\providecommand{\path}[1]{#1}
\providecommand{\DOIprefix}{doi:}
\providecommand{\ArXivprefix}{arXiv:}
\providecommand{\URLprefix}{URL: }
\providecommand{\Pubmedprefix}{pmid:}
\providecommand{\doi}[1]{\href{http://dx.doi.org/#1}{\path{#1}}}
\providecommand{\Pubmed}[1]{\href{pmid:#1}{\path{#1}}}
\providecommand{\bibinfo}[2]{#2}
\ifx\xfnm\relax \def\xfnm[#1]{\unskip,\space#1}\fi
\bibitem[{Bates and Leibling(2012)}]{bates2012spaced}
\bibinfo{author}{J.~Bates}, \bibinfo{author}{D.~Leibling},
  \bibinfo{title}{Spaced Out: Perspectives on parking policy},
  \bibinfo{type}{Report}, RAC Foundation, \bibinfo{year}{2012}.
\bibitem[{Shoup(2017)}]{shoup2017high}
\bibinfo{author}{D.~Shoup}, \bibinfo{title}{The High Cost of Free Parking:
  Updated Edition}, \bibinfo{publisher}{Routledge}, \bibinfo{year}{2017}.
\bibitem[{Oliveira et~al.(2017)Oliveira, Carravilla, and
  Oliveira}]{oliveira2017fleet}
\bibinfo{author}{B.~B. Oliveira}, \bibinfo{author}{M.~A. Carravilla},
  \bibinfo{author}{J.~F. Oliveira},
\newblock \bibinfo{title}{Fleet and revenue management in car rental companies:
  A literature review and an integrated conceptual framework},
\newblock \bibinfo{journal}{Omega} \bibinfo{volume}{71} (\bibinfo{year}{2017})
  \bibinfo{pages}{11--26}.
\bibitem[{Hoff et~al.(2010)Hoff, Andersson, Christiansen, Hasle, and
  L{\o}kketangen}]{hoff2010industrial}
\bibinfo{author}{A.~Hoff}, \bibinfo{author}{H.~Andersson},
  \bibinfo{author}{M.~Christiansen}, \bibinfo{author}{G.~Hasle},
  \bibinfo{author}{A.~L{\o}kketangen},
\newblock \bibinfo{title}{Industrial aspects and literature survey: Fleet
  composition and routing},
\newblock \bibinfo{journal}{Computers \& Operations Research}
  \bibinfo{volume}{37} (\bibinfo{year}{2010}) \bibinfo{pages}{2041--2061}.
\bibitem[{Ma et~al.(2012)Ma, Balthasar, Tait, Riera-Palou, and
  Harrison}]{ma2012new}
\bibinfo{author}{H.~Ma}, \bibinfo{author}{F.~Balthasar},
  \bibinfo{author}{N.~Tait}, \bibinfo{author}{X.~Riera-Palou},
  \bibinfo{author}{A.~Harrison},
\newblock \bibinfo{title}{A new comparison between the life cycle greenhouse
  gas emissions of battery electric vehicles and internal combustion vehicles},
\newblock \bibinfo{journal}{Energy policy} \bibinfo{volume}{44}
  (\bibinfo{year}{2012}) \bibinfo{pages}{160--173}.
\bibitem[{Goodall et~al.(2017)Goodall, Dovey, Bornstein, and
  Bonthron}]{goodall2017rise}
\bibinfo{author}{W.~Goodall}, \bibinfo{author}{T.~Dovey},
  \bibinfo{author}{J.~Bornstein}, \bibinfo{author}{B.~Bonthron},
\newblock \bibinfo{title}{The rise of mobility as a service},
\newblock \bibinfo{journal}{Deloitte Rev} \bibinfo{volume}{20}
  (\bibinfo{year}{2017}) \bibinfo{pages}{112--129}.
\bibitem[{G{\"o}tz et~al.(2011)G{\"o}tz, Sunderer, Birzle-Harder, Deffner, and
  Berlin}]{gotz2011attraktivitat}
\bibinfo{author}{K.~G{\"o}tz}, \bibinfo{author}{G.~Sunderer},
  \bibinfo{author}{B.~Birzle-Harder}, \bibinfo{author}{J.~Deffner},
  \bibinfo{author}{B.~Berlin},
\newblock \bibinfo{title}{{Attraktivit\"{a}t und Akzeptanz von Elektroautos:
  Ergebnisse aus dem Projekt OPTUM Optimierung der Umweltentlastungspotenziale
  von Elektrofahrzeugen}},
\newblock \bibinfo{journal}{ISOE-Studientexte} \bibinfo{volume}{18}
  (\bibinfo{year}{2011}).
\bibitem[{Metcalfe and Dolan(2012)}]{metcalfe2012behavioural}
\bibinfo{author}{R.~Metcalfe}, \bibinfo{author}{P.~Dolan},
\newblock \bibinfo{title}{Behavioural economics and its implications for
  transport},
\newblock \bibinfo{journal}{Journal of transport geography}
  \bibinfo{volume}{24} (\bibinfo{year}{2012}) \bibinfo{pages}{503--511}.
\bibitem[{Ostermann et~al.(2014)Ostermann, Renner, Koetter, and
  Hudert}]{Ostermann2014}
\bibinfo{author}{J.~Ostermann}, \bibinfo{author}{T.~Renner},
  \bibinfo{author}{F.~Koetter}, \bibinfo{author}{S.~Hudert},
\newblock \bibinfo{title}{Leveraging electric cross-company car fleets through
  cloud service chains: The shared e-fleet architecture},
\newblock in: \bibinfo{booktitle}{Global Conference (SRII), 2014 Annual SRII},
  \bibinfo{organization}{IEEE}, \bibinfo{year}{2014}, pp.
  \bibinfo{pages}{290--297}.
\bibitem[{Koetter et~al.(2015)Koetter, Ostermann, and Jecan}]{Koetter2015}
\bibinfo{author}{F.~Koetter}, \bibinfo{author}{J.~Ostermann},
  \bibinfo{author}{D.~V. Jecan},
\newblock \bibinfo{title}{Schedule rating method based on a fragmentation
  criterion},
\newblock in: \bibinfo{booktitle}{The Fourth International Conference on
  Advances in Vehicular Systems, Technologies and Applications},
  \bibinfo{year}{2015}, p.~\bibinfo{pages}{28}.
\bibitem[{Betz et~al.(2016)Betz, Werner, and Lienkamp}]{Betz2016}
\bibinfo{author}{J.~Betz}, \bibinfo{author}{D.~Werner},
  \bibinfo{author}{M.~Lienkamp},
\newblock \bibinfo{title}{Fleet disposition modeling to maximize utilization of
  battery electric vehicles in companies with on-site energy generation},
\newblock \bibinfo{journal}{Transportation Research Procedia}
  \bibinfo{volume}{19} (\bibinfo{year}{2016}) \bibinfo{pages}{241--257}.
\bibitem[{Sassi and Oulamara(2017)}]{Sassi2014}
\bibinfo{author}{O.~Sassi}, \bibinfo{author}{A.~Oulamara},
\newblock \bibinfo{title}{Electric vehicle scheduling and optimal charging
  problem: complexity, exact and heuristic approaches},
\newblock \bibinfo{journal}{International Journal of Production Research}
  \bibinfo{volume}{55} (\bibinfo{year}{2017}) \bibinfo{pages}{519--535}.
\bibitem[{Kolen et~al.(2007)Kolen, Lenstra, Papadimitriou, and
  Spieksma}]{Kolen2007}
\bibinfo{author}{A.~W. Kolen}, \bibinfo{author}{J.~K. Lenstra},
  \bibinfo{author}{C.~H. Papadimitriou}, \bibinfo{author}{F.~C. Spieksma},
\newblock \bibinfo{title}{Interval scheduling: A survey},
\newblock \bibinfo{journal}{Naval Research Logistics (NRL)}
  \bibinfo{volume}{54} (\bibinfo{year}{2007}) \bibinfo{pages}{530--543}.
\bibitem[{Kovalyov et~al.(2007)Kovalyov, Ng, and Cheng}]{kovalyov2007fixed}
\bibinfo{author}{M.~Y. Kovalyov}, \bibinfo{author}{C.~Ng},
  \bibinfo{author}{T.~E. Cheng},
\newblock \bibinfo{title}{Fixed interval scheduling: Models, applications,
  computational complexity and algorithms},
\newblock \bibinfo{journal}{European journal of operational research}
  \bibinfo{volume}{178} (\bibinfo{year}{2007}) \bibinfo{pages}{331--342}.
\bibitem[{Kroon et~al.(1997)Kroon, Salomon, and
  Van~Wassenhove}]{kroon1997exact}
\bibinfo{author}{L.~G. Kroon}, \bibinfo{author}{M.~Salomon},
  \bibinfo{author}{L.~N. Van~Wassenhove},
\newblock \bibinfo{title}{Exact and approximation algorithms for the tactical
  fixed interval scheduling problem},
\newblock \bibinfo{journal}{Operations Research} \bibinfo{volume}{45}
  (\bibinfo{year}{1997}) \bibinfo{pages}{624--638}.
\bibitem[{Ng et~al.(2014)Ng, Cheng, Bandalouski, Kovalyov, and
  Lam}]{ng2014graph}
\bibinfo{author}{C.~T. Ng}, \bibinfo{author}{T.~C.~E. Cheng},
  \bibinfo{author}{A.~M. Bandalouski}, \bibinfo{author}{M.~Y. Kovalyov},
  \bibinfo{author}{S.~S. Lam},
\newblock \bibinfo{title}{A graph-theoretic approach to interval scheduling on
  dedicated unrelated parallel machines},
\newblock \bibinfo{journal}{Journal of the Operational Research Society}
  \bibinfo{volume}{65} (\bibinfo{year}{2014}) \bibinfo{pages}{1571--1579}.
\bibitem[{Bunte and Kliewer(2009)}]{bunte2009overview}
\bibinfo{author}{S.~Bunte}, \bibinfo{author}{N.~Kliewer},
\newblock \bibinfo{title}{An overview on vehicle scheduling models},
\newblock \bibinfo{journal}{Public Transport} \bibinfo{volume}{1}
  (\bibinfo{year}{2009}) \bibinfo{pages}{299--317}.
\bibitem[{Hassold and Ceder(2014)}]{hassold2014public}
\bibinfo{author}{S.~Hassold}, \bibinfo{author}{A.~A. Ceder},
\newblock \bibinfo{title}{Public transport vehicle scheduling featuring
  multiple vehicle types},
\newblock \bibinfo{journal}{Transportation Research Part B: Methodological}
  \bibinfo{volume}{67} (\bibinfo{year}{2014}) \bibinfo{pages}{129--143}.
\bibitem[{Desfontaines and Desaulniers(2018)}]{desfontaines2018multiple}
\bibinfo{author}{L.~Desfontaines}, \bibinfo{author}{G.~Desaulniers},
\newblock \bibinfo{title}{Multiple depot vehicle scheduling with controlled
  trip shifting},
\newblock \bibinfo{journal}{Transportation Research Part B: Methodological}
  \bibinfo{volume}{113} (\bibinfo{year}{2018}) \bibinfo{pages}{34--53}.
\bibitem[{Ernst et~al.(2011)Ernst, Gavriliouk, and Marquez}]{Ernst2011}
\bibinfo{author}{A.~T. Ernst}, \bibinfo{author}{E.~O. Gavriliouk},
  \bibinfo{author}{L.~Marquez},
\newblock \bibinfo{title}{An efficient lagrangean heuristic for rental vehicle
  scheduling},
\newblock \bibinfo{journal}{Computers \& Operations Research}
  \bibinfo{volume}{38} (\bibinfo{year}{2011}) \bibinfo{pages}{216--226}.
\bibitem[{Ernst et~al.(2007)Ernst, Horn, Krishnamoorthy, Kilby, Degenhardt, and
  Moran}]{Ernst2007}
\bibinfo{author}{A.~T. Ernst}, \bibinfo{author}{M.~Horn},
  \bibinfo{author}{M.~Krishnamoorthy}, \bibinfo{author}{P.~Kilby},
  \bibinfo{author}{P.~Degenhardt}, \bibinfo{author}{M.~Moran},
\newblock \bibinfo{title}{Static and dynamic order scheduling for recreational
  rental vehicles at tourism holdings limited},
\newblock \bibinfo{journal}{Interfaces} \bibinfo{volume}{37}
  (\bibinfo{year}{2007}) \bibinfo{pages}{334--341}.
\bibitem[{Oliveira et~al.(2014)Oliveira, Carravilla, Oliveira, and
  Toledo}]{oliveira2014relax}
\bibinfo{author}{B.~B. Oliveira}, \bibinfo{author}{M.~A. Carravilla},
  \bibinfo{author}{J.~F. Oliveira}, \bibinfo{author}{F.~M. Toledo},
\newblock \bibinfo{title}{A relax-and-fix-based algorithm for the
  vehicle-reservation assignment problem in a car rental company},
\newblock \bibinfo{journal}{European Journal of Operational Research}
  \bibinfo{volume}{237} (\bibinfo{year}{2014}) \bibinfo{pages}{729--737}.
\bibitem[{Farber(1984)}]{Farber1984}
\bibinfo{author}{M.~Farber},
\newblock \bibinfo{title}{Domination, independent domination, and duality in
  strongly chordal graphs},
\newblock \bibinfo{journal}{Discrete Applied Mathematics} \bibinfo{volume}{7}
  (\bibinfo{year}{1984}) \bibinfo{pages}{115--130}.
\bibitem[{Escoffier et~al.(2005)Escoffier, Monnot, and Paschos}]{Escoffier2005}
\bibinfo{author}{B.~Escoffier}, \bibinfo{author}{J.~Monnot},
  \bibinfo{author}{V.~T. Paschos},
\newblock \bibinfo{title}{Weighted coloring: further complexity and
  approximability results},
\newblock in: \bibinfo{booktitle}{Italian Conference on Theoretical Computer
  Science}, \bibinfo{organization}{Springer}, \bibinfo{year}{2005}, pp.
  \bibinfo{pages}{205--214}.
\bibitem[{Bar-Noy et~al.(2001)Bar-Noy, Bar-Yehuda, Freund, Naor, and
  Schieber}]{Bar2001}
\bibinfo{author}{A.~Bar-Noy}, \bibinfo{author}{R.~Bar-Yehuda},
  \bibinfo{author}{A.~Freund}, \bibinfo{author}{J.~Naor},
  \bibinfo{author}{B.~Schieber},
\newblock \bibinfo{title}{A unified approach to approximating resource
  allocation and scheduling},
\newblock \bibinfo{journal}{Journal of the ACM (JACM)} \bibinfo{volume}{48}
  (\bibinfo{year}{2001}) \bibinfo{pages}{1069--1090}.
\bibitem[{Angelelli et~al.(2014)Angelelli, Bianchessi, and
  Filippi}]{Angelelli2014}
\bibinfo{author}{E.~Angelelli}, \bibinfo{author}{N.~Bianchessi},
  \bibinfo{author}{C.~Filippi},
\newblock \bibinfo{title}{Optimal interval scheduling with a resource
  constraint},
\newblock \bibinfo{journal}{Computers \& Operations Research}
  \bibinfo{volume}{51} (\bibinfo{year}{2014}) \bibinfo{pages}{268--281}.
\bibitem[{Butman et~al.(2010)Butman, Hermelin, Lewenstein, and
  Rawitz}]{Butman2010}
\bibinfo{author}{A.~Butman}, \bibinfo{author}{D.~Hermelin},
  \bibinfo{author}{M.~Lewenstein}, \bibinfo{author}{D.~Rawitz},
\newblock \bibinfo{title}{Optimization problems in multiple-interval graphs},
\newblock \bibinfo{journal}{ACM Transactions on Algorithms (TALG)}
  \bibinfo{volume}{6} (\bibinfo{year}{2010}) \bibinfo{pages}{40}.
\bibitem[{Burke et~al.(2010)Burke, Mare{\v{c}}ek, Parkes, and
  Rudov{\'a}}]{Burke2010}
\bibinfo{author}{E.~K. Burke}, \bibinfo{author}{J.~Mare{\v{c}}ek},
  \bibinfo{author}{A.~J. Parkes}, \bibinfo{author}{H.~Rudov{\'a}},
\newblock \bibinfo{title}{A supernodal formulation of vertex colouring with
  applications in course timetabling},
\newblock \bibinfo{journal}{Annals of Operations Research}
  \bibinfo{volume}{179} (\bibinfo{year}{2010}) \bibinfo{pages}{105--130}.
\bibitem[{Marx(2004)}]{Marx2004}
\bibinfo{author}{D.~Marx},
\newblock \bibinfo{title}{Graph colouring problems and their applications in
  scheduling},
\newblock \bibinfo{journal}{Periodica Polytechnica Electrical Engineering}
  \bibinfo{volume}{48} (\bibinfo{year}{2004}) \bibinfo{pages}{11--16}.
\bibitem[{Bast et~al.(2016)Bast, Delling, Goldberg, M{\"u}ller-Hannemann,
  Pajor, Sanders, Wagner, and Werneck}]{Bast2016}
\bibinfo{author}{H.~Bast}, \bibinfo{author}{D.~Delling},
  \bibinfo{author}{A.~Goldberg}, \bibinfo{author}{M.~M{\"u}ller-Hannemann},
  \bibinfo{author}{T.~Pajor}, \bibinfo{author}{P.~Sanders},
  \bibinfo{author}{D.~Wagner}, \bibinfo{author}{R.~F. Werneck},
\newblock \bibinfo{title}{Route planning in transportation networks},
\newblock in: \bibinfo{booktitle}{Algorithm Engineering},
  \bibinfo{publisher}{Springer International Publishing}, \bibinfo{year}{2016},
  pp. \bibinfo{pages}{19--80}.
\bibitem[{Arkin and Silverberg(1987)}]{Arkin1987}
\bibinfo{author}{E.~M. Arkin}, \bibinfo{author}{E.~B. Silverberg},
\newblock \bibinfo{title}{Scheduling jobs with fixed start and end times},
\newblock \bibinfo{journal}{Discrete Applied Mathematics} \bibinfo{volume}{18}
  (\bibinfo{year}{1987}) \bibinfo{pages}{1--8}.
\bibitem[{Moon and Moser(1965)}]{Moon1965}
\bibinfo{author}{J.~W. Moon}, \bibinfo{author}{L.~Moser},
\newblock \bibinfo{title}{On cliques in graphs},
\newblock \bibinfo{journal}{Israel journal of Mathematics} \bibinfo{volume}{3}
  (\bibinfo{year}{1965}) \bibinfo{pages}{23--28}.
\bibitem[{Gupta et~al.(1982)Gupta, Lee, and Leung}]{Gupta1982}
\bibinfo{author}{U.~I. Gupta}, \bibinfo{author}{D.-T. Lee},
  \bibinfo{author}{J.-T. Leung},
\newblock \bibinfo{title}{Efficient algorithms for interval graphs and
  circular-arc graphs},
\newblock \bibinfo{journal}{Networks} \bibinfo{volume}{12}
  (\bibinfo{year}{1982}) \bibinfo{pages}{459--467}.
\bibitem[{Gupta et~al.(1979)Gupta, Lee, and Leung}]{Gupta1979}
\bibinfo{author}{U.~I. Gupta}, \bibinfo{author}{D.-T. Lee},
  \bibinfo{author}{J.-T. Leung},
\newblock \bibinfo{title}{An optimal solution for the channel-assignment
  problem},
\newblock \bibinfo{journal}{IEEE Transactions on Computers}
  (\bibinfo{year}{1979}) \bibinfo{pages}{807--810}.
\bibitem[{Gustin(1963)}]{Gustin1963}
\bibinfo{author}{W.~Gustin},
\newblock \bibinfo{title}{Orientable embedding of cayley graphs},
\newblock \bibinfo{journal}{Bulletin of the American Mathematical Society}
  \bibinfo{volume}{69} (\bibinfo{year}{1963}) \bibinfo{pages}{272--275}.
\bibitem[{Sanders and Schulz(2012)}]{Schulz2013}
\bibinfo{author}{P.~Sanders}, \bibinfo{author}{C.~Schulz},
\newblock \bibinfo{title}{High quality graph partitioning.},
\newblock \bibinfo{journal}{Graph Partitioning and Graph Clustering}
  \bibinfo{volume}{588} (\bibinfo{year}{2012}) \bibinfo{pages}{1--17}.
\bibitem[{Shaw(1998)}]{Shaw1998}
\bibinfo{author}{P.~Shaw},
\newblock \bibinfo{title}{Using constraint programming and local search methods
  to solve vehicle routing problems},
\newblock in: \bibinfo{booktitle}{International Conference on Principles and
  Practice of Constraint Programming}, \bibinfo{organization}{Springer},
  \bibinfo{year}{1998}, pp. \bibinfo{pages}{417--431}.
\bibitem[{Pisinger and Ropke(2010)}]{Pisinger2010}
\bibinfo{author}{D.~Pisinger}, \bibinfo{author}{S.~Ropke},
\newblock \bibinfo{title}{Large neighborhood search},
\newblock in: \bibinfo{booktitle}{Handbook of metaheuristics},
  \bibinfo{publisher}{Springer}, \bibinfo{year}{2010}, pp.
  \bibinfo{pages}{399--419}.
\bibitem[{Ropke and Pisinger(2006)}]{ropke2006adaptive}
\bibinfo{author}{S.~Ropke}, \bibinfo{author}{D.~Pisinger},
\newblock \bibinfo{title}{An adaptive large neighborhood search heuristic for
  the pickup and delivery problem with time windows},
\newblock \bibinfo{journal}{Transportation science} \bibinfo{volume}{40}
  (\bibinfo{year}{2006}) \bibinfo{pages}{455--472}.
\bibitem[{Prescott-Gagnon et~al.(2009)Prescott-Gagnon, Desaulniers, and
  Rousseau}]{Prescott-Gagnon2009}
\bibinfo{author}{E.~Prescott-Gagnon}, \bibinfo{author}{G.~Desaulniers},
  \bibinfo{author}{L.-M. Rousseau},
\newblock \bibinfo{title}{A branch-and-price-based large neighborhood search
  algorithm for the vehicle routing problem with time windows},
\newblock \bibinfo{journal}{Networks} \bibinfo{volume}{54}
  (\bibinfo{year}{2009}) \bibinfo{pages}{190--204}.
\bibitem[{Ribeiro and Laporte(2012)}]{Ribeiro2012}
\bibinfo{author}{G.~M. Ribeiro}, \bibinfo{author}{G.~Laporte},
\newblock \bibinfo{title}{An adaptive large neighborhood search heuristic for
  the cumulative capacitated vehicle routing problem},
\newblock \bibinfo{journal}{Computers \& operations research}
  \bibinfo{volume}{39} (\bibinfo{year}{2012}) \bibinfo{pages}{728--735}.
\bibitem[{Ropke and Pisinger(2006)}]{Ropke2006}
\bibinfo{author}{S.~Ropke}, \bibinfo{author}{D.~Pisinger},
\newblock \bibinfo{title}{An adaptive large neighborhood search heuristic for
  the pickup and delivery problem with time windows},
\newblock \bibinfo{journal}{Transportation science} \bibinfo{volume}{40}
  (\bibinfo{year}{2006}) \bibinfo{pages}{455--472}.
\bibitem[{Godard et~al.(2005)Godard, Laborie, and Nuijten}]{Godard2005}
\bibinfo{author}{D.~Godard}, \bibinfo{author}{P.~Laborie},
  \bibinfo{author}{W.~Nuijten},
\newblock \bibinfo{title}{Randomized large neighborhood search for cumulative
  scheduling.},
\newblock in: \bibinfo{booktitle}{International Conference on Automated
  Planning and Scheduling (ICAPS)}, volume~\bibinfo{volume}{5},
  \bibinfo{year}{2005}, pp. \bibinfo{pages}{81--89}.
\bibitem[{Feo and Resende(1995)}]{Feo1995}
\bibinfo{author}{T.~A. Feo}, \bibinfo{author}{M.~G. Resende},
\newblock \bibinfo{title}{Greedy randomized adaptive search procedures},
\newblock \bibinfo{journal}{Journal of global optimization} \bibinfo{volume}{6}
  (\bibinfo{year}{1995}) \bibinfo{pages}{109--133}.
\bibitem[{Lueker and Booth(1979)}]{lueker1979}
\bibinfo{author}{G.~S. Lueker}, \bibinfo{author}{K.~S. Booth},
\newblock \bibinfo{title}{A linear time algorithm for deciding interval graph
  isomorphism},
\newblock \bibinfo{journal}{Journal of the ACM (JACM)} \bibinfo{volume}{26}
  (\bibinfo{year}{1979}) \bibinfo{pages}{183--195}.
\bibitem[{L{\'o}pez-Ib{\'a}{\~n}ez et~al.(2016)L{\'o}pez-Ib{\'a}{\~n}ez,
  Dubois-Lacoste, C{\'a}ceres, Birattari, and St{\"u}tzle}]{lopez2016irace}
\bibinfo{author}{M.~L{\'o}pez-Ib{\'a}{\~n}ez},
  \bibinfo{author}{J.~Dubois-Lacoste}, \bibinfo{author}{L.~P. C{\'a}ceres},
  \bibinfo{author}{M.~Birattari}, \bibinfo{author}{T.~St{\"u}tzle},
\newblock \bibinfo{title}{The irace package: Iterated racing for automatic
  algorithm configuration},
\newblock \bibinfo{journal}{Operations Research Perspectives}
  \bibinfo{volume}{3} (\bibinfo{year}{2016}) \bibinfo{pages}{43--58}.
\bibitem[{Enzi et~al.(2020)Enzi, Parragh, and Pisinger}]{enzi2020modeling}
\bibinfo{author}{M.~Enzi}, \bibinfo{author}{S.~N. Parragh},
  \bibinfo{author}{D.~Pisinger},
\newblock \bibinfo{title}{Modeling and solving a vehicle-sharing problem},
\newblock \bibinfo{journal}{arXiv preprint arXiv:2003.08207}
  (\bibinfo{year}{2020}).

\end{thebibliography}

\clearpage
\appendix

\section{Detailed description of the generation of instance set RW}\label{app:instance}

The benchmark instances are based on demographic, spatial, and economic data of Vienna, Austria,
and consider a company which operates in that area.
First, a set of mode of transport classes $K$ are defined consisting of the following types: 
\emph{Foot}, \emph{Public transport},	\emph{Bike}, \emph{Battery electric vehicle (BEV)} and subtypes corresponding to specific car models, \emph{Internal combustion engine vehicle (ICEV)} and subtypes corresponding to the size of the vehicle and \emph{Taxi}.
For both BEV and ICEV several sub-categories are defined which correspond to car models, e.g., for BEVs we consider Smart ED, Nissan Leaf, and Mitsubishi iMiev.
The properties of each $k\in K$ are the following:

\bigskip

\begin{center}
\begin{tabular}{lllp{25em}}
Parameter & Domain & Unit & Description \\ \midrule
$\epsilon^k$ & $\mathbb{R}$ & g/km &  CO$_2$ emissions per distance unit \\
$v^k$ & $\mathbb{R}$ & m/s & average speed \\
$c^k_d$ & $\mathbb{R}$ & 1/km & cost in Euro per distance \\
$c^k_t$ & $\mathbb{R}$ & 1/min & cost in Euro per time \\
$a^k$ & $\mathbb{R}$ & s & additional time needed for setup (e.g., getting to the car, time needed for parking)
\end{tabular}
\end{center}

\bigskip

Then, a company is constructed consisting of one or more depots $\Delta\subset L$, where each $\delta\in \Delta$ is represented by its geographic coordinate and $L$ is the set of all possible locations.
The company has a set of employees $P$, and a number of available instances $n_k$ of each transport class $k\in K$.
Note that $n_k=\infty$ for \emph{foot}, \emph{public transport}, and \emph{taxi}.

Each employee $p\in P$ has a gender $\theta^p\in \{\mathrm{f},\mathrm{m}\}$, a hierarchy status $h^p\in \{\mathrm{b,m,w}\}$ (boss, middle management, worker), an associated office location $\delta^p\in \Delta$, a home location $l^p\in L$, a work start time $\tau^s \in \mathbb{N}$, and a work end time $\tau^e \in \mathbb{N}$
For all $k\in K$ it is specified if employee $p$ is willing to accept offers using transport mode $k$, denoted by $\omega^{pk}\in\{0,1\}$, $\forall k\in K, p\in P$.

Then, for each employee $p\in P$ on each day $t\in T$ of the considered time horizon $T$ an ordered list of events $E^{pt}=(e^{pt}_{0},\dots,e^{pt}_{n})$ is generated (representing a working day of this employee) consisting of the following attributes:

\bigskip

\begin{center}
\begin{tabular}{lllp{25em}}
Parameter & Domain & Unit & Description \\ \midrule
$\alpha^e$ & $\mathbb{N}$ & min & latest arrival (in number of minutes from the start of the time horizon) \\
$\beta^e$ & $\mathbb{N}$ & min & earliest departure \\
$s^e$ & $\mathbb{N}$ & min & service duration \\
$l^e$ & $L$ & & location \\
$t^e$ & $\{\mathrm{w,m,p,h}\}$ & & activity type: \emph{work}, \emph{meeting}, \emph{private}, \emph{home}
\end{tabular}
\end{center}

\bigskip

Furthermore, for each pair of locations $l_1,l_2\in L$ a distance $d_{ij}^k$, travel time $t_{ij}^k$, and cost matrix $c_{ij}^k$ is computed for each $k\in K$ based on the route from $l_1$ to $l_2$ in the road network.

\paragraph{Value Settings}~\newline
This section describes how the independent values of the variables described above are set.
Some of the variables are chosen randomly following the stated probability distribution.
In these cases the actual instance is generated by drawing one sample of each of these distributions.

\noindent\emph{Transport classes: }

\bigskip

\begin{center}
\begin{tabular}{lllp{25em}}
Parameter & Variability & Scope & Value \\ \midrule
$\epsilon^k$ & fixed & all & average values of the respective car category \\
$v^k$ & fixed & all & foot: 5, bike: 16, car: 30, public transport: 20 [km/h] \\
$c^k_d$ & fixed & all &  total cost of ownership divided by total km \\
$c^k_t$ & fixed & all &  average gross salary in Austria including additional costs for employer \\
$a^k$ & fixed & all & foot: 0, bike: 120, car: 600, public transport: 300, taxi: 300 [s]
\end{tabular}
\end{center}

\bigskip

\noindent\emph{Company: }

\begin{center}
\begin{tabular}{llp{25em}}
Parameter & Variability & Value \\ \midrule
$L$ & fixed & geometric centers of all 250 registration districts of Vienna \\
$T$ & fixed & one week \\
$\Delta$ & fixed & two locations chosen randomly following the probability distribution $\mathcal{P}^o$ of $L$, where $\mathcal{P}^o$ is based on statistical data of office locations in Vienna \\
$|P|$ & variable & integer value \\
$\nu$ & variable & real value in the interval [0,1] determining $n_k$, $\forall k\in K$ \\
$n_k$ & fixed & for bikes, BEVs, and ICEVs: between 0 and $\lfloor\nu|P|\rfloor$ \\
\end{tabular}
\end{center}
\bigskip

\noindent\emph{Employee: }

\begin{center}
\begin{tabular}{llp{25em}}
Parameter & Variability & Value \\ \midrule
$\theta^p$ & fixed & based on demographic data of female and male employees (f: 46.78\%, m: 53.22\%) \\
$h^p$ & fixed & $P(h^p=\mathrm{b})=0.01$, $P(h^p=\mathrm{m})=0.1$, $P(h^p=\mathrm{w})=0.89$\\
$\delta^p$ & fixed & chosen uniformly at random out of $\Delta$ \\
$l^p$ & fixed & chosen randomly following the probability distribution $\mathcal{P}^h$ of $L$, where $\mathcal{P}^h$ is based on statistical data of residential locations in Vienna \\
$\tau^s$ & fixed & chosen randomly following a probability distribution $\mathcal{P}^{\tau^s}$ between 5 and 11 a.m.  \\
$\tau^e$ & fixed & $\tau^s$ + amount of daily working hours, which are chosen randomly following a probability distribution $\mathcal{P}^{\tau^e}$ which depends on $\theta^p$ and $h^p$ \\
$\omega^{pk}$ & fixed & we defined 7 combinations of accepted mode of transports, e.g., car only, public transport only, mixed. For each combination at most different acceptance scenarios are defined. The combinations are chosen randomly based on a probability distribution $\mathcal{P}^\omega$ considering gender and the probability that $p$ has a driving license which itself is based on statistical data. The acceptance scenario of the chosen category is taken uniformly at random.
\end{tabular}
\end{center}

\bigskip

\noindent\emph{Events: }

\begin{center}
\begin{tabular}{llp{25em}}
Parameter & Variability & Value \\ \midrule
$\alpha^e$ & fixed & private activity: at any time outside working hours. Work meeting: at any time within the working hours. \\
$\beta^e$ & fixed & $\alpha^e+s^e$ \\
$s^e$ & fixed & private meetings in the morning 60 minutes, in the evening 120 minutes. Work meetings between 30 and 180 minutes based on probability distribution $\mathcal{P}^{s^e}$.\\
$l^e$ & fixed & based on $\mathcal{P}^h$ for private activities, on $\mathcal{P}^o$ for work meetings \\
$t^e$ & fixed & for each day: private activity in the morning with 20\% probability, in the evening with 65\% probability. The number of work meetings is based on $h^p$ which results in a average amount of time spent in meetings. A meeting is inserted into the daily schedule of the employee until this time is spent or it does not fit in anymore. \\
\end{tabular}
\end{center}

\noindent\emph{Distance, travel time, and cost: }
\bigskip

\begin{center}
\begin{tabular}{llp{25em}}
Parameter & Variability & Value \\ \midrule
$d^k_{ij}$ & fixed & Aerial distance between $i$ and $j$ multiplied by a constant sloping factor of the respective mode of transport $k$ \\
$t^k_{ij}$ & fixed & $\frac{d^k_{ij}}{v^k}$ \\
$c^k_{ij}$ & fixed & $c^k_dd^k_{ij}+c^k_tt^k_{ij}+\epsilon^kc_e$, where $c_e$ are the CO$_2$ costs which are set to 5 Euro per ton.
\end{tabular}
\end{center}

\bigskip

\paragraph{Generation of the Mobility Offers}

Based on the data described above we extract mobility demands and offers which form the actual instance of our optimization problem.
First, we generate the set of mobility demands $D$ by considering the events $E^p=\bigcup_{t\in T}E^{pt}$ of each employee $p\in P$.
Since we assume that the company fleet is located at the depots $\Delta$, each mobility demand $d\in D$ consists of a tour starting and ending at the office location $\delta^p$ of the corresponding employee $p$.
Therefore we construct the set of demands $D^p=\{d^p_0,\dots,d^p_m\}$ with $d^p_i=(e^{p}_{j},e^p_{j+1},\dots,e^p_{q})\subseteq E^p$ with $q>j$ for all $j=0,\dots, n$ with $t^{e_j^p}=t^{e_q^p}=\textrm{w}$, $\forall i\in 0,\dots,m$ for each employee $p\in P$.

For each $p\in P$ and each $d\in D^p$ a set of mobility offers $O^{d}$ is created.
There is one offer for each transport class $k\in K$ which is accepted by the employee, i.e., for which $\omega^{pk}=1$, denoted by $k^o\in K$.
Each offer $o\in O^{d}$ has an \emph{journey interval} $[a_{o},b_{o}]$ with $a_{o},b_{o}\in \mathbb{R}$ defining its start time $a_{o}$ and end time $b_{o}$.
The start time $a_{o}$ is given by the latest arrival $\alpha^{e^p_{j+1}}$ of the first event of the associated demand subtracted by half the setup time $\frac{1}{2}a^{k}$ of the corresponding vehicle type $k\in K$ and the travel time $t_{l_1,l_2}^{k}$ with $l_1=l^{e^p_{j}}=\delta^p$ and $l_2=l^{e^p_{j+1}}$.
The end time is given by $b_{o}=\beta^{e^p_{q}}+t^{k}_{l_3,l_4}+\frac{1}{2}a^{k}$ with $l_3=l^{e^p_{q}}$ and $l_4=\delta^p$ resulting in a duration $\pi_{o^{d^p}}=b_{o^{d^p}}-a_{o^{d^p}}$.

Finally, the cost $c_{o}$ of each offer $o\in O^{d}$, $\forall d\in D^p, p\in P$ is generated based on the cost matrix of the relevant events and the corresponding transport class $k$.
The salary costs which depend on the duration of the offer are, however, only considered for \emph{work events}, i.e., the journeys from work to the meetings and from the meetings back to work.
More specifically, the cost of an offer $o\in O^{d}$ with $d^p=(e^{p}_{j},e^p_{j+1},\dots,e^p_{q})$, $\forall p\in P$ contains the setup costs $C_S$ and the travel costs $C_T$ is $C^o=C_S+C_T$ with:

$$C_S=a^{k}c_t^{k} $$

$$C_T= \sum\limits_{i=j}^{q-1}c_{l^{e^p_i}l^{e^p_{i+1}}}^{k} - 0.8t_{l^{e^p_i}l^{e^p_{i+1}}}c_t^{k}(1-\Gamma_{e^p_ie^p_{i+1}}), \text{with}$$

\begin{equation*}
\Gamma_{e^p_ie^p_{i+1}}=\left\{\begin{array}{ll}
1 & \text{if } (t^{e^p_i} = \mathrm{w} \land t^{e^p_i} = \mathrm{m}) \lor (t^{e^p_i} = \mathrm{m} \land t^{e^p_i} = \mathrm{w}) \\
0 & \text{else}
\end{array}\right.
\end{equation*}

\end{document}